\pdfoutput=1

\documentclass[11pt]{article}
\usepackage{bbm}
\usepackage{fix-cm}
\usepackage[]{acl}

\usepackage{tikz}

\usepackage{enumitem}

\usepackage{xfrac}
 
\usepackage{mathtools}
\mathtoolsset{showonlyrefs,showmanualtags}
\usepackage{amsfonts}
\usepackage{bm}
\usepackage{times}
\usepackage{latexsym}
\usepackage{amsfonts,amsmath,amsthm, amssymb}
\usepackage{graphicx}
\usepackage{placeins}
\usepackage{array}
\usepackage{ifthen}
\usepackage{tabularx}
\usepackage{booktabs}
\usepackage{xcolor}
\usepackage{relsize}
\usepackage{stfloats}
\usepackage{float}
\usepackage{multirow}
\usepackage{makecell}
\usepackage{graphicx}
\usepackage{subcaption}
\usepackage{makecell}

\newcommand{\rv}[1]{\mathbf{#1}}
\newcommand{\myvec}[1]{\mathbf{#1}}
\newcommand{\mymat}[1]{\bm{#1}}

\usepackage[T1]{fontenc}

\usepackage{microtype}

\usepackage{listings}
\usepackage{xcolor}
\usepackage{mdframed}

\lstdefinelanguage{SQL}{
  morekeywords={SELECT,FROM,WHERE,JOIN,ON,AND,OR,NOT,NULL,IS,AS,ORDER,BY,DESC,TOP,DECLARE,INT,DATETIME},
  sensitive=false,
  morecomment=[l]{--},
  morestring=[b]'
}

\newcommand{\ignore}[1]{}

\definecolor{mygreenua}{HTML}{F1F5EB}
\definecolor{myredda}{HTML}{FFE6E6}

\newcommand{\uahelper}[1]{\colorbox{myredda}{\smaller$\uparrow$#1}}
\newcommand{\dahelper}[1]{\colorbox{mygreenua}{\smaller$\downarrow$#1}}

\newcommand{\uaghelper}[1]{\colorbox{mygreenua}{\smaller$\uparrow$#1}}
\newcommand{\dabhelper}[1]{\colorbox{myredda}{\smaller$\downarrow$#1}}

\newcommand{\ua}[1]{\ifthenelse{\equal{#1}{0.00} \or \equal{#1}{0.000}}{}{\uahelper{#1}}}
\newcommand{\da}[1]{\ifthenelse{\equal{#1}{0.00} \or \equal{#1}{0.000}}{}{\dahelper{#1}}}
\newcommand{\uag}[1]{\ifthenelse{\equal{#1}{0.00} \or \equal{#1}{0.000}}{}{\uaghelper{#1}}}
\newcommand{\dab}[1]{\ifthenelse{\equal{#1}{0.00} \or \equal{#1}{0.000}}{}{\dabhelper{#1}}}

\newtheorem{lemma}{Lemma}
\newtheorem{proposition}{Proposition}

\newcommand{\logoso}{\raisebox{-0.05in}{\includegraphics[width=0.21in]{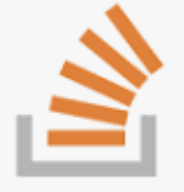}}}
\newcommand{\logoms}{\raisebox{-0.05in}{\includegraphics[width=0.21in]{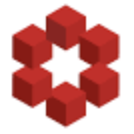}}}
\newcommand{\logoes}{\raisebox{-0.05in}{\includegraphics[width=0.21in]{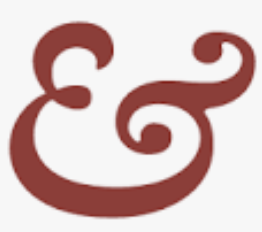}}}
\newcommand{\logosu}{\raisebox{-0.05in}{\includegraphics[width=0.21in]{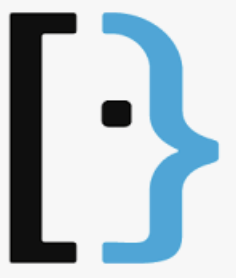}}}

\title{Debiasing Without Protected Attributes: \\ Latent Concept Erasure from Textual Profiles}

\author{Shun Shao$^1$\thanks{Corresponding author. Code is available at \url{https://github.com/jasonshaoshun/HSAL}.} \qquad  Zheng Zhao$^2$ \\  \textbf{Anna Korhonen}$^1$  \qquad
\textbf{Yftah Ziser}$^{3,4}$ \qquad \textbf{Shay B. Cohen}$^2$ \qquad  \\
$^1$University of Cambridge \qquad $^2$University of Edinburgh \\
$^3$University of Groningen \qquad $^4$NVIDIA Research   \\
\texttt{ss3047@cam.ac.uk} \quad \texttt{zheng.zhao@ed.ac.uk} \\
\texttt{alk23@cam.ac.uk} \quad \texttt{yziser@nvidia.com} \quad \texttt{scohen@inf.ed.ac.uk} }

\begin{document}
\maketitle
\begin{abstract}
Most fairness research in NLP assumes direct access to protected attributes such as gender, race, or nationality. In practice, however, such information is often unavailable due to privacy constraints, missing metadata, or legal restrictions, even though models may infer it from indirect textual cues. This raises a key question: can debiasing succeed without direct access to sensitive attributes? We propose H-\textsc{SAL}, which performs post-hoc concept and attribute erasure using \emph{self-description text} as an implicit debiasing signal. To support this setting, we introduce a multi-domain Stack Exchange-based fairness benchmark for helpfulness prediction that includes both \emph{explicit} and \emph{implicit} signals, enabling comparison between standard debiasing with protected labels and debiasing without access to sensitive information. Across encoder and decoder-only language models, we find that implicit self-description often matches or outperforms explicit-label-based debiasing. Our results broaden representation-level fairness research and provide a new benchmark for studying debiasing under realistic data constraints.
\end{abstract}

\section{Introduction}
\begin{figure}[!th]
    \centering
    \begin{mdframed}[backgroundcolor=gray!10, roundcorner=5pt, linewidth=0.5pt]
    \small
    \logoso \textbf{Input ($\mathbf{X}$):} \textit{Q: What is the C\# version of VB.NET's InputBox?} \\
    \textit{A: You mean InputBox? Just look in the Microsoft.VisualBasic namespace. C\# and VB.Net share a common library. If one language can use it\ldots} \\
    \textbf{Answer Helpfulness ($\mathbf{Y}$):} Answer score = 3 \\
    \textbf{About Me ($\mathcal{H}$):} \textit{\colorbox{yellow!20}{Director of Information Technology,} \colorbox{yellow!20}{York University Nebraska.}\ldots The avatar is both because I play counter strike and a nod to lambda expressions in C\#\ldots} \\
    \textbf{Relevant Attributes ($\mathbf{Z}$):} Reputation = 418{,}535; Location = York, Nebraska, USA
    \end{mdframed}

    \vspace{0.08cm}
    
    \begin{mdframed}[backgroundcolor=gray!10, roundcorner=5pt, linewidth=0.5pt]
    \small
    \logoes \textbf{Input ($\mathbf{X}$):} \textit{Q: What is the closest alternative to ``rubbish'' in American English?} \\
    \textit{A: First of all, the word rubbish is pretty well understood by American English speakers, and although it does have a British flavor it is used occasionally by Americans\ldots} \\
    \textbf{Answer Helpfulness ($\mathbf{Y}$):} Answer score = 8 \\[0.08cm]
    \textbf{About Me ($\mathcal{H}$):} \textit{I have a \colorbox{yellow!20}{degree in linguistics} and am partial to descriptivist approaches to usage\ldots I am a \colorbox{yellow!20}{native speaker of American English}\ldots} \\
    \textbf{Relevant Attributes ($\mathbf{Z})$:} Reputation = 69{,}358; Location = San Jose, CA, United States
    \end{mdframed}

    \caption{\textbf{Self-description cues used for textual erasure.} Real examples from Stack Exchange user profiles, where $\mathcal{H}$ denotes the user's self-description. The \colorbox{yellow!20}{highlighted spans} mark words or phrases that may implicitly reveal attributes such as education, reputation or location, and are therefore used to learn directions for textual erasure.}

    \label{fig:motivating_example}
\end{figure}

Fairness research in natural language processing (NLP) has largely assumed that protected attributes are explicitly available. Under this setup, many methods use demographic labels such as gender, race, or nationality to debias models and measure fairness through group performance gaps. In practice, however, these attributes are often missing or unavailable due to privacy, legal, or data-collection constraints \citep{ashurst2023withoutdemographics}. Yet their absence from the database does not imply their absence from model representations: \citet{neplenbroek2025personalization} show that LLMs can infer demographics from stereotypical dialogue cues, encode them in latent representations, and let them shape responses even without explicit disclosure.

A prominent line of work addresses bias through post-hoc representation erasure under explicit supervision. Methods such as INLP \citep{ravfogel-etal-2020-null}, SAL \citep{shao-etal-2022-SAL}, RLACE \citep{ravfogel22-LACE,shauli-2022-adversarial}, and LEACE \citep{nora-etal-leace} remove demographic information while largely preserving task utility. However, they rely on direct access to demographic labels, which are often unavailable or unreliable in deployment. To relax this requirement, \citet{shao2023erasure} propose zero-shot debiasing by alternating between inferring attribute assignments and removing directions correlated with bias, while \citet{shao-etal-2025-iterative} show that cross-lingual transfer can identify shared bias subspaces for low-resource settings.

These works raise a broader question: can concept and attribute erasure succeed using only implicit textual signals rather than explicit protected-attribute labels? More fundamentally, are explicit demographic labels really the strongest supervision for debiasing, or can indirect cues be equally effective? Although prior work has explored proxies such as name embeddings for race and gender \citep{romanov-etal-2019-whats}, direct protected-attribute supervision remains the dominant paradigm.

We study this question through concept and attribute erasure from user self-description text in Stack Exchange (SE) communities (Figure~\ref{fig:motivating_example}). Our implicit signal comes only from free-form self-description, not from structured metadata such as reputation or engagement. We propose H-\textsc{SAL}, a variant of SAL \citep{shao-etal-2022-SAL} that uses profile-text representations in place of explicit attribute labels. Because the same self-description signal captures multiple attribute-related cues, a single erasure step can mitigate several forms of user-attribute bias rather than requiring a separate run per attribute. We compare H-\textsc{SAL} against standard debiasing with explicit protected attributes, allowing us to test both whether implicit supervision is viable and whether it can surpass direct demographic supervision.

Across a new fairness benchmark for helpfulness prediction, we find that self-description text provides a surprisingly strong debiasing signal. In both encoder and decoder-only models, implicit post-hoc erasure often matches or outperforms explicit-label-based debiasing. More broadly, our results show that effective representation-level debiasing need not depend on explicitly observed protected attributes, but can instead emerge from indirect textual cues through which social information is encoded in language.

\section{Related Work}

Much fairness research in NLP assumes access to explicit protected attributes and removes them from learned representations through post-hoc erasure. INLP iteratively projects representations into the null space of attribute classifiers \citep{ravfogel-etal-2020-null}, while RLACE, kernelized concept erasure, SAL, and LEACE extend this paradigm through adversarial optimization, nonlinear removal, spectral methods, and closed-form linear erasure \citep{ravfogel22-LACE,shauli-2022-adversarial,shao-etal-2022-SAL,nora-etal-leace}. Earlier adversarial approaches also suppressed demographic information during training \citep{edwards2015censoring,coavoux-etal-2018-privacy,elazar-goldberg-2018-adversarial,barrett-etal-2019-adversarial}. Together, these methods form a strong debiasing toolkit, but they mostly rely on explicit protected-attribute supervision.

A smaller line of work considers weaker or indirect supervision. \citet{romanov-etal-2019-whats} use name embeddings as proxies for race and gender; AMSAL \cite{shao2023erasure} removes unaligned attributes without paired instance-level labels; \citet{han2023everybody} develop an unsupervised data locality-based method to mitigate bias, and IMSAE \citep{shao-etal-2025-iterative} transfers erasure subspaces across languages in zero-shot settings. Related work includes debiasing without demographics \citep{orgad-belinkov-2023-blind}, proxy-based fairness formulations \citep{gupta2018proxyfairness}, and evidence that LLMs infer demographic traits from subtle cues and encode them in latent representations even without explicit disclosure \citep{neplenbroek2025personalization}. Similar post-hoc editing ideas also appear in text-to-image debiasing, though usually with explicit demographic annotations \citep{fu2025fairimagen}. Recent representation-engineering work has also explored training-free activation editing, including SAE-based methods that identify and modify instruction-relevant sparse features to improve model behavior during generation \citep{zhao2025sparseactivationeditingreliable}.
Our work is related to this broader family of post-hoc representation interventions, but is closest to the second line above: we use only free-form self-description, rather than names or structured metadata, as the implicit signal and compare it against explicit-attribute debiasing.

\section{Problem Formulation and Notation}

We use bold lowercase letters for vectors, and bold uppercase letters for matrices and random variables/vectors. For random variables, $\rv{S}$ and $\rv{T}$, we denote by $\rho(\rv{S}, \rv{T})$ the correlation between them. When they are standardized (mean $0$, variance $1$), their correlation is also their covariance.

Each example consists of four variables:
\[
\rv{X} \in \mathbb{R}^{d}, \qquad
\rv{Y} \in \mathbb{R}, \qquad
\rv{Z} \in \mathcal{Z}, \qquad
\mathcal{H},
\]
where $\rv{X}$ is the representation of the main-task input, $\rv{Y}$ is the task label, $\rv{Z}$ is the guarded attribute ($\mathcal{Z}$ denotes the set of possible attribute values), and $\mathcal{H}$ is the raw self-description text. We encode $\mathcal{H}$ with a text encoder $\phi$ into a continuous representation
$\rv{H} = \phi(\mathcal{H}) \in \mathbb{R}^{d'}$.

Our goal is to predict $\rv{Y}$ from $\rv{X}$ while reducing the information about $\rv{Z}$ contained in $\rv{X}$. In the explicit setting, debiasing methods have direct access to $\rv{Z}$. In our implicit setting, $\rv{Z}$ is not used by the debiasing method; instead, we use the self-description text $\mathcal{H}$, through its representation $\rv{H}$, as an indirect signal for erasure.

We assume $n$ samples $\left\{ \bigl(\mathbf{x}^{(i)}, y^{(i)}, \mathcal{H}^{(i)} \bigr) \right\}_{i=1}^{n}$,
with corresponding guarded attributes
$\left\{ z^{(i)} \right\}_{i=1}^{n}$.
Let  $\mathbf{h}^{(i)} = \phi\!\left(\mathcal{H}^{(i)}\right) \in \mathbb{R}^{d'}$
denote the representation of the $i$-th self-description. Our method estimates an erasure subspace from paired samples $(\mathbf{x}^{(i)}, \mathbf{h}^{(i)})$.

\section{Methodology}
\label{sec:implicit-sal}
\label{section:method}

\begin{figure}
\centering
\scalebox{0.8}{
    \begin{tikzpicture}

\filldraw[fill=red,   fill opacity=0.4, draw=black] (0,1.1) circle (2);
\filldraw[fill=green, fill opacity=0.4, draw=black] (-1.3,-0.6) circle (2);
\filldraw[fill=blue,  fill opacity=0.4, draw=black] (1.3,-0.6) circle (2);

\node at (0,2.4) {$\rv{X}$};
\node at (-2.4,-0.6) {$\rv{Z}$};
\node at (2.4,-0.6) {$\rv{H}$};

\end{tikzpicture}
}
\caption{\label{fig:venn}A Venn diagram showing the information content overlap between the different r.v.s. A success of H-SAL depends on how much the part of $\rv{Z}$ that overlaps with $\rv{X}$ also overlaps with $\rv{H}$.}
\end{figure}

Building on Spectral Attribute removaL (SAL; \citealt{shao-etal-2022-SAL}), we refer to our approach
as H-SAL (hidden SAL). The algorithm performs the erasure by projecting the input
representations from $\rv{X}$ into a subspace that is least covarying
with $\rv{Z}$ (represented as a one-hot vector).
Rather than working directly with $\rv{Z}$, we perform our erasure
implicitly, by building the cross-covariance matrix that relies on $\rv{H}$:
\begin{equation}
\mymat{\Omega} \;=\; \frac{1}{n}\sum_{i=1}^{n} \myvec{x}^{(i)} \bigl(\myvec{h}^{(i)}\bigr)^{\top}
\;\in\; \mathbb{R}^{d \times d'}.
\end{equation}
Once $\mymat{\Omega}$ is calculated, we perform singular value decomposition (SVD)
on it, and then use the projection on samples from $\rv{X}$ based
on $\hat{\rv{X}}$:
\begin{equation}
\hat{\rv{X}} = \left(\mymat{I} - \mymat{U}_{1:k} \mymat{U}_{1:k}^{\top}\right) \rv{X}.
\end{equation}
\noindent where  $\mymat{U}_{1:k}$ are the $k$ columns of the left singular vector matrix derived by SVD on $\mymat{\Omega}$ associated with the largest singular vectors.
One of the reasons that SAL is ideal for the implicit approach is that it allows us to
use continuous non-categorical variables as protected attributes, allowing us to replace a categorical $\rv{Z}$ with a vector $\rv{H}$.

It is beyond the scope of this paper to motivate this approach, as it relies on previous known work which uses projection methods to debias or erase information from embeddings \cite[\emph{inter alia}]{mu-2018-allbutthetop,liang-etal-2020-monolingual,shao-etal-2022-SAL,ravfogel22-LACE}.

\begin{figure}[t]
    \centering
    \includegraphics[width=\linewidth]{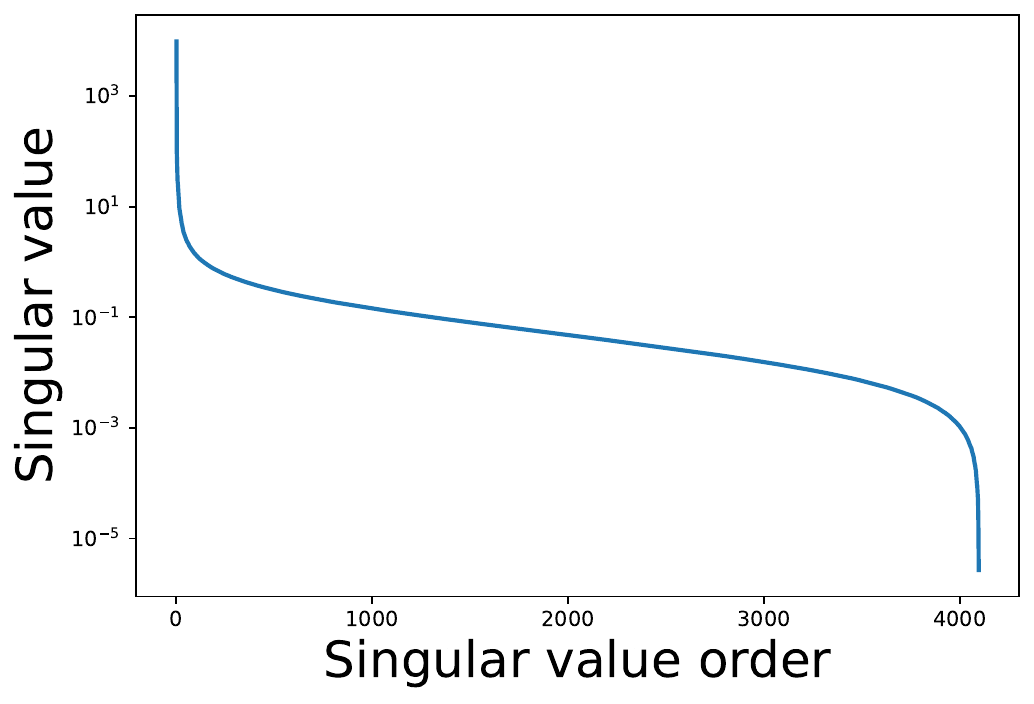}

    \caption{Singular values of the cross-covariance matrix between input representations $\mathbf{X}$ and self-description representations $\mathbf{H}$ for Llama-3.1-8B on the Mathematics dataset. The gradual decay suggests that user-related information is distributed across multiple latent directions, helping explain why H-SAL captures richer bias-related features and achieves stronger debiasing than SAL.}

    \label{fig:singular-value-decay}

\end{figure}

\paragraph{Discussion and Justification}  (This  can be skimmed in a first reading.) Replacing $\rv{Z}$ with $\rv{H}$ as a mediator has several key advantages:

\begin{itemize}[topsep=0pt,noitemsep,parsep=0pt,leftmargin=*]

\item It does not require direct access to the protected attribute, but rather allows accessing and erasing it through indirect evidence and information;

\item It allows a more gradual control over the erasure level, because $k$ can vary and the rank of $\mymat{\Omega}$ is larger than typically with a binary or multiclass attribute $\mymat{Z}$ (if $\mymat{\Omega}$ was computed
directly using $\mymat{Z}$); As shown in Figure~\ref{fig:singular-value-decay}, the singular values decay much more gradually when $\mymat{\Omega}$ is derived from $\rv{H}$, indicating a higher effective rank than when using $\rv{Z}$ directly.

\item It may capture more variability and subtle cues that are not directly captured by a discrete pre-determined $\rv{Z}$, but are captured by $\rv{H}$.
    
\end{itemize}

We note that a central premise of this approach, when the goal
is to erase the $\rv{Z}$ concept is that if $\rv{X}$ and $\rv{Z}$ covary, and if $\rv{H}$ and $\rv{Z}$ covary, then $\rv{X}$  and $\rv{H}$ covary in similar overlapping way. Hence, it is possible to use $\rv{H}$ as an intermediate variable.

Covariance levels are not necessarily retained through intermediate variables. For example, $\rv{H}$ and $\rv{Z}$  could be independent of each other (hence with covariance $0$), but if
$\rv{X} = \mymat{A} \rv{Z} + \mymat{B} \rv{H}$ for  two matrices $\mymat{A} \in \mathbb{R}^{d \times m}, \mymat{B} \in \mathbb{R}^{d \times d'}$ then $\rv{X}$ could covary with both. Or, alternatively, if all variables are Boolean and $\rv{X} = \rv{Z} \textit{\, xor\,  } \rv{H}$, the covariance between $\rv{X}$ and either of the other $\rv{H}$ and $\rv{Z}$ could be $0$ while $\rv{Z}$ and $\rv{H}$ could covary.

However, the intuition that covariance is transferable through an intermediate variable is to an extent true, and for simplicity, assume that the mean of $\rv{X}$, $\rv{H}$ and $\rv{Z}$ is 0 and that the random vectors are normalized
by the square-root inverse of their (full-rank) covariance matrices (in which case, their covariance becomes the identity matrix). Define $S$ to be a function that takes a matrix and returns the largest singular value of the matrix together with the left singular value and right singular value associated with them. Then, define:
\begin{align}
    (\rho_1, \myvec{u}_1, \myvec{v}_1) = S(\mathbb{E}[\rv{X}\rv{Z}^{\top}]), \label{eq:a} \\
     (\rho_2, \myvec{u}_2, \myvec{v}_2) = S(\mathbb{E}[\rv{X}\rv{H}^{\top}]), \label{eq:b} \\
    (\rho_3, \myvec{u}_3, \myvec{v}_3) = S(\mathbb{E}[\rv{H}\rv{Z}^{\top}]) . \label{eq:c}
\end{align}
It holds that $\rho_i$ corresponds to the correlation (and covariance, due to the normalization above) between the $\myvec{u}_i$, $\myvec{v}_i$ projection of the corresponding r.v.s in the covariance matrix. The question one might ask is: \emph{can we express (or lower-bound) $\rho_1$ in terms of $\rho_2$ and $\rho_3$ (and all $\rv{u}_i$ and $\rv{v}_i$)?} In cases where $\rv{Z}$ is available in only small amounts, this means it could be labeled through $\rv{H}$ on a small dataset, and then the concept erasure could be applied to $\rv{X}$ through $\rv{H}$ with provided guarantees on the amount removed with respect to $\rv{Z}$. This idea can be formalized as follows. Define the function for $0 \le a,b \le 1$:
\begin{equation}
    f(a,b) = \max \{ 0, ab - \sqrt{(1-a^2)(1-b^2)} \}.
\end{equation}
In Appendix~\ref{app:justification}, we show that under simple conditions on the singular vectors, it holds that:
\begin{equation}
    \rho_1 \ge f(\rho_2, f(\rho_3, \myvec{u}_3^{\top} \myvec{v}_2)).
\end{equation}
Note that the bound can be symmetrically flipped to have $\rho_2$ on the left, dependent on $\rho_1$. Figure~\ref{fig:bound-plot} provides a plot of the bound on $\rho_1$ for two cases: one in which there is a 45 degree angle between $\myvec{u}_3$ and $\myvec{v}_2$ and one in which they are identical.

\begin{figure}
    \includegraphics[width=0.48\textwidth]{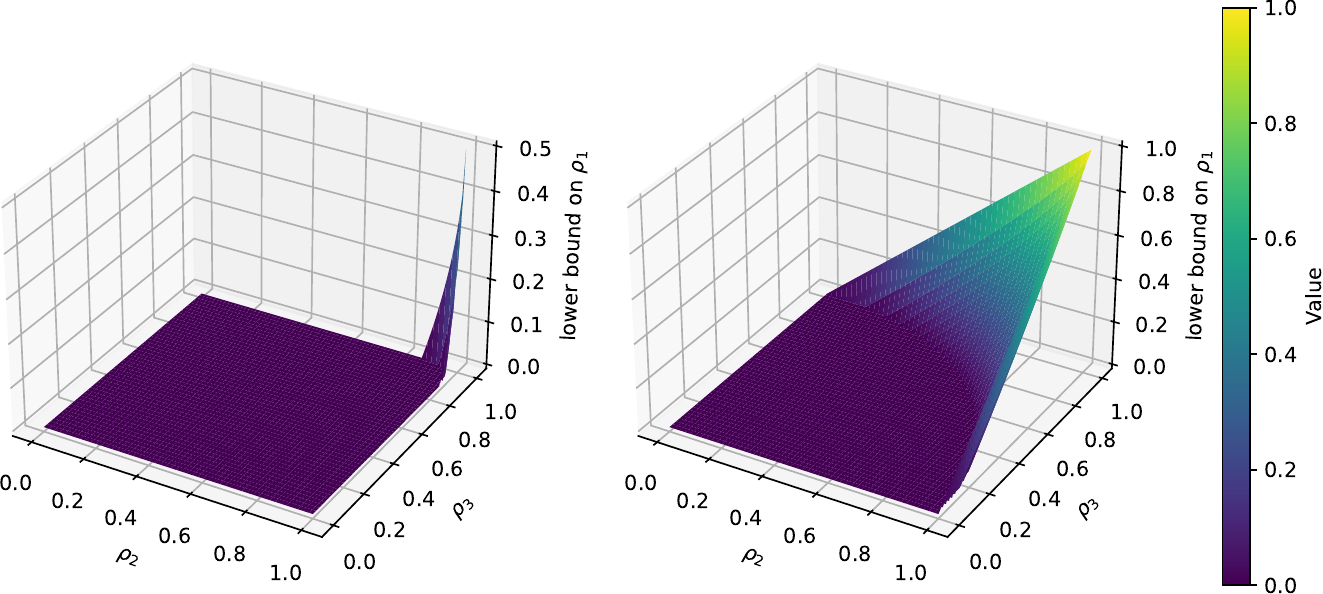}
    \caption{Plots for the bound for $\rho_1$ where $\myvec{u}_3^{\top} \myvec{v}_2 = 0.5$ (left) and $=1$ (right). \label{fig:bound-plot}}
\end{figure}

While we aim to minimize covariance between the projected $\rv{X}$ and $\rv{Z}$ (or $\rv{H}$), we do so by finding the maximally covarying directions, and then project to their complement space. Figure~\ref{fig:venn} provides the intuition in terms of a Venn diagram.

\ignore{
\paragraph{Spectral decomposition and erasure.}
To identify the directions in $X$ that are most entangled with the implicit identity signal in $Z_{\text{text}}$, we compute the singular value decomposition (SVD) of the cross-covariance matrix:
\[
\Omega = U \Sigma V^{\top}.
\]
Here, $U \in \mathbb{R}^{d_x \times r}$ and $V \in \mathbb{R}^{d_z \times r}$ contain the left and right singular vectors, respectively, $r = \mathrm{rank}(\Omega)$, and $\Sigma = \mathrm{diag}(\sigma_1, \sigma_2, \ldots, \sigma_r)$ contains singular values in descending order, with
\[
\sigma_1 \geq \sigma_2 \geq \cdots \geq \sigma_r \geq 0.
\]
The left singular vectors $\{u_j\}_{j=1}^{r}$ define orthogonal directions in the main-task representation space. The leading singular vectors, corresponding to the largest singular values, identify the directions in $X$ that exhibit the strongest linear co-variation with the user's textual profile.

To perform textual erasure, we choose a rank threshold $k$ and form the matrix
\[
U_{1:k} = [u_1, u_2, \ldots, u_k] \in \mathbb{R}^{d_x \times k}.
\]
We then project $X$ onto the orthogonal complement of the corresponding bias subspace:
\[
\hat{X} = \left(I - U_{1:k} U_{1:k}^{\top}\right) X.
\]
This projection removes the top-$k$ components of $X$ that are most strongly aligned, in a linear cross-covariance sense, with the implicit user profile.

\paragraph{Why implicit erasure can be more effective.}

}

\section{The Stack Exchange Fairness Benchmark}
\label{sec:dataset}

We study fairness in a setting where socially salient attributes are not given as structured labels, but are implicitly recoverable from text. Unlike benchmarks that assume explicit protected attributes, real platforms often expose user-authored metadata, such as biographies, that indirectly reveal identity or background.

To study this setting, we construct a new benchmark from SE. Each example consists of a main-task input $\rv{X}$, profile text $\mathcal{H}$, and user attribute labels $\rv{Z}$. Here, $\rv{X}$ is the question--answer context for helpfulness prediction ($\rv{Y}$), and $\mathcal{H}$ is user-provided self-description. The attribute $\rv{Z}$ is used only for evaluation and explicit-label baselines; in the implicit setting, the model must identify and remove the bias signal from profile text alone. Figure~\ref{fig:motivating_example} shows such signals in practice.

The benchmark is designed to test: (1) implicit bias detection; (2) generalization beyond memorized user cues through \emph{user-disjoint} splits; and (3) the separation of model bias from distributional artifacts through \emph{natural} and \emph{balanced} settings.

\subsection{Source Communities and Example Construction}
\label{sec:se_sources}

\paragraph{Domains.}
We construct four subsets from the SE network: \textbf{StackOverflow}, \textbf{Mathematics}, \textbf{SuperUser}, and \textbf{English}. Together, they span diverse content and discourse styles, from technical problem solving ({StackOverflow}, SuperUser) to formula-heavy reasoning ({Mathematics}) and language-focused explanations ({English}), enabling evaluation across different notions of helpfulness.

\paragraph{Instances.}
Each datapoint is an answer-level example consisting of: (i) the question--answer context as the main-task input $\rv{X}$; (ii) user-authored profile text $\mathcal{H}$ (primarily \texttt{AboutMe}); and (iii) user metadata needed to derive evaluation attributes $\rv{Z}$ (e.g., location, reputation, engagement). We start from public SE data dumps and construct examples by joining post- and user-level tables, then apply a shared text cleaning and filtering pipeline.\footnote{We drop entries with missing or insufficient main text or profile text. Full extraction and preprocessing details are provided in Appendix~\ref{app:data-processing}, with summary statistics reported in Table~\ref{tab:user-counts-natural-disjoint} in the appendix.}

\subsection{User Attributes and Implicit Bias Sources}
\label{sec:attributes}

We study five user attributes as sources of bias in helpfulness prediction: three  profile attributes (\textbf{DisplayName}, \textbf{gender}, \textbf{country}) and two platform-derived status signals (\textbf{reputation}, \textbf{engagement}). For evaluation, the main task label \textbf{helpfulness} and the user attributes \textbf{reputation}, \textbf{gender} and \textbf{engagement} are binarized, while \textbf{country} is represented as a five-class attribute across all datasets and \textbf{DisplayName} is represented through clustered name groups.

\paragraph{DisplayName, gender, and country.}
DisplayName, gender, and country act as identity-related proxy signals. DisplayName groups are obtained by clustering user display names in embedding space, with the number of clusters selected by silhouette score. Gender labels are also derived from users' \texttt{DisplayName}: we match extracted names against the Baby Names in England and Wales dataset provided by the Office for National Statistics, using the records from 1996 onward to construct male- and female-name lexicons \citep{ons_babynames_2025}. Country labels are derived from normalized free-form location strings and restricted to the top five most frequent countries in each dataset; the exact country sets are reported in Table~\ref{tab:country-sets}. These attributes may capture cultural, linguistic, gendered, or regional cues that affect perceived credibility beyond answer content.

\paragraph{Reputation and engagement.}
Reputation and engagement are platform-native status signals. Reputation summarizes prior community approval, while engagement reflects participation intensity. Both can induce halo effects, causing models to favor users with signals of status or embeddedness rather than judging the answer itself.

\paragraph{Explicit vs.\ implicit settings.}
The benchmark supports both \emph{explicit} and \emph{implicit} fairness settings. In the explicit setting, the attribute label $\rv{Z}$ is available to the debiasing method. In the implicit setting, only user profile text $\mathcal{H}$ is available, while $\rv{Z}$ is used solely for evaluation. 
Full attribute construction details are deferred to Appendix~\ref{app:attribute-details}.

\subsection{Binarizing Attributes and Labels}
\label{sec:label-construction}

User status attributes and post helpfulness are not binary on SE. We therefore derive binary labels by thresholding platform scores:
\[
\begin{aligned}
\rv{Y} &= \mathbbm{1}[\texttt{AnswerScore} \ge \tau_Y], \\
\rv{Z}_{\text{rep}} &= \mathbbm{1}[\texttt{Reputation} \ge \tau_Z], \\
\rv{Z}_{\text{eng}} &= \mathbbm{1}[\texttt{Engagement} \ge \tau_Z].
\end{aligned}
\]
Here, $\rv{Y}$ is the main-task helpfulness label, while $\rv{Z}_{\text{rep}}$ and $\rv{Z}_{\text{eng}}$ are binary status attributes derived from user reputation and engagement. Country and display name labels are derived via country normalization and display name clustering (\S\ref{sec:attributes}).

\paragraph{Choosing thresholds.}

Thresholds are selected separately for each Stack Exchange community based on the empirical score distributions. For helpfulness, we choose dataset-specific answer-score thresholds so that the downstream label $\rv{Y}$ can be balanced within each split, avoiding majority-class prediction. For reputation and engagement, we choose dataset-specific thresholds that yield non-degenerate status groups while preserving sufficient data for user-disjoint evaluation. The selected thresholds are reported in Table~\ref{tab:selected-thresholds} in the appendix.

For balanced settings, we additionally construct the largest feasible subset in which each joint group $(\rv{Z},\rv{Y}) \in \{0,1\}^2$ occupies 25\% of the data. This allows us to separate model-internal bias from simple label--attribute imbalance. Full threshold grids and selected values are reported in Appendix~\ref{app:split-details}.

\subsection{Data Splits}
\label{sec:splits}

We use a \textbf{User-Disjoint} setting. Splits are constructed by post counts with a 6:1:3 train--validation--test ratio, while enforcing that no \texttt{UserId} appears in more than one split. This avoids train--test leakage through repeated users.

The helpfulness label $\rv{Y}$ is balanced in each split to prevent majority-class prediction. For the attribute label $\rv{Z}$, reputation, engagement, and gender are approximately balanced, while display name clusters and country retain their natural distributions because these attributes are highly imbalanced and long-tailed; fully balancing them would discard most examples and yield very small subsets. Further construction details are provided in Appendix~\ref{app:split-details}.

\subsection{Relation to Other SE Datasets}
\label{sec:relation_to_other_SE_dataset}

Prior work has used SE as a source of natural language data with associated user attributes \cite[\emph{inter alia}]{rao-daume-iii-2018-learning,brooke2019condescending,hazoom-etal-2021-text}. The closest to our setting is \citet{shao-etal-2025-iterative}, who introduce the multilingual \textsc{SeFair} dataset. Our benchmark differs in two ways: it studies bias sources available through \emph{textual user metadata}, enabling implicit debiasing from free-form profiles rather than only structured labels, and it includes a true \textbf{user-disjoint} regime, preventing methods from exploiting recurring user-specific cues across data splits.

\section{Experimental Setup}
\label{sec:exp-setup}

We evaluate whether latent concept erasure reduces bias in helpfulness prediction while preserving prediction accuracy. Each experiment instantiates: (i) $\rv{Y}$, corresponding to answer helpfulness; (ii)  $\rv{Z}$, corresponding to one of the five bias sources described in \S\ref{sec:dataset}; and (iii) profile text $\mathcal{H}$ for the implicit signal. %

\paragraph{Representations and encoders.}
We instantiate the framework with three open-source encoders: BERT~\citep{devlin-etal-2019-bert}, Llama-3.1-8B~\citep{grattafiori2024llama3herdmodels}, and Mistral-7B~\citep{jiang2023mistral7b}. For each input text, we extract representations from the final hidden layer. For BERT, we use the first-token (\texttt{[CLS]}) representation; for decoder-only models, we use the final non-padding token representation. These representations are used both for helpfulness prediction and for estimating the erasure subspace.

\paragraph{Debiasing variants.}
We compare four conditions: \textbf{Baseline} (no debiasing), \textbf{SAL (random)} (removing random directions as a control), \textbf{SAL} (estimating the erasure subspace from the explicit attribute label $\rv{Z}$), and \textbf{H-SAL} (estimating the erasure subspace from the implicit profile representation $\rv{H}$ without directly using $\rv{Z}$). After projection, we train a lightweight classifier to predict helpfulness. The random control tests whether any observed change comes from removing an arbitrary subspace rather than one aligned with the target bias. Full classifier and implementation details are provided in Appendix~\ref{app:classifier-details}.

\paragraph{Metrics.}
We report main-task accuracy for helpfulness prediction and \textbf{TPR-Gap} as the fairness metric. Following prior work on equal opportunity \citep{DeArteaga-2019-biasbios}, TPR-Gap measures disparities in true positive rates across groups defined by the target attribute $\rv{Z}$. Lower values indicate fairer behavior. 
Full details in Appendix~\ref{app:evaluation-metrics}.

\paragraph{Erasure strength.}
Erasure requires choosing the number of removed directions, $k$. We use $k=50$ in the main results and examine smaller values in \S\ref{sec:effect_of_k}; additional details are provided in Appendix~\ref{app:k-details}.

\paragraph{Predictability of $\rv{Z}$ from $\rv{H}$.}
As discussed in \S\ref{section:method}, our method assumes that the representation $\rv{H}$ contains information about the protected attribute $\rv{Z}$; otherwise, there would be little signal to remove. Table~\ref{fig:accuracy-on-self-description} confirms this assumption, shows that profile-text embeddings encode attribute-related information, and provides a justification for studying debiasing in the representation space.

\section{Experiments}

We evaluate three models and compare \textbf{implicit} and \textbf{explicit} textual erasure against a \textbf{random} baseline. Our main results are averaged over the three models, providing a compact view of the overall fairness--utility trade-off across datasets and attributes. Appendix~\ref{app:additional_results} reports per-model results and sensitivity to the number of removed directions.

\begin{table*}[th!]
\centering
\resizebox{\textwidth}{!}{
\begin{tabular}{ll rr  rr  rr  rr}
\toprule
Bias Type & Method & \multicolumn{2}{c}{\textcolor{black}{\logoso \textbf{Stackoverflow}}} & \multicolumn{2}{c}{\textcolor{black}{\logoms \textbf{Mathematics}}} & \multicolumn{2}{c}{\textcolor{black}{\logosu \textbf{SuperUser}}} & \multicolumn{2}{c}{\textcolor{black}{\logoes \textbf{English}}} \\
& & Main & TPR-Gap & Main & TPR-Gap & Main & TPR-Gap & Main & TPR-Gap \\
\midrule
\multirow{4}{*}{DisplayName} & Baseline & 69.2 & 4.9 & 64.0 & 4.8 & 71.1 & 4.1 & 63.5 & 9.9 \\
\addlinespace[4pt]
& SAL (random) & \dab{0.4} 68.8 & \da{0.7} 4.2 & \dab{0.2} 63.8 & \da{0.4} 4.4 & 71.1 & \ua{0.1} 4.3 & \dab{1.0} 62.5 & \da{1.5} 8.4 \\
\addlinespace[4pt]
& SAL & \dab{0.8} 68.4 & \da{2.9} 2.0 & \dab{1.1} 62.8 & \da{2.0} 2.8 & \dab{0.3} 70.9 & \ua{0.3} 4.4 & \dab{0.8} 62.7 & \da{0.8} 9.1 \\
\addlinespace[4pt]
& H-SAL & \dab{1.7} 67.5 & \da{3.6} 1.3 & \dab{1.9} 62.1 & \da{3.2} 1.6 & \dab{1.0} 70.1 & \da{1.2} 2.9 & \dab{2.1} 61.5 & \da{3.7} 6.2 \\
\addlinespace[4pt]
\midrule

\multirow{4}{*}{Gender} & Baseline & 56.7 & 0.3 & 54.7 & 2.8 & 58.8 & 11.6 & 54.3 & 9.9 \\
\addlinespace[4pt]
& SAL (random) & \dab{0.8} 55.9 & \ua{0.5} 0.8 & \uag{0.3} 55.0 & \da{0.1} 2.8 & \dab{1.6} 57.2 & \da{0.8} 10.7 & \dab{1.3} 52.9 & \ua{1.8} 11.7 \\
\addlinespace[4pt]
& SAL & \dab{0.5} 56.1 & \ua{0.5} 0.8 & \uag{0.4} 55.1 & \da{0.1} 2.8 & \uag{0.4} 59.1 & 11.6 & \uag{0.1} 54.4 & \da{0.1} 9.8 \\
\addlinespace[4pt]
& H-SAL & \dab{2.4} 54.3 & \ua{1.2} 1.5 & \dab{1.4} 53.3 & \da{1.5} 1.4 & \dab{0.7} 58.1 & \da{1.2} 10.4 & \dab{1.8} 52.4 & \da{0.4} 9.5 \\
\addlinespace[4pt]
\midrule

\multirow{4}{*}{Country} & Baseline & 67.4 & 11.5 & 61.0 & 9.7 & 63.4 & 7.0 & 59.3 & 10.1 \\
\addlinespace[4pt]
& SAL (random) & \dab{0.3} 67.0 & \da{0.7} 10.8 & \dab{0.1} 60.9 & \da{0.5} 9.3 & \dab{0.2} 63.2 & \da{0.1} 6.9 & 59.3 & \da{0.8} 9.2 \\
\addlinespace[4pt]
& SAL & \dab{0.6} 66.8 & \da{8.1} 3.4 & \dab{0.1} 60.9 & \da{2.7} 7.1 & \dab{0.3} 63.1 & \da{3.9} 3.0 & \uag{0.1} 59.5 & \da{0.3} 9.8 \\
\addlinespace[4pt]
& H-SAL & \dab{2.3} 65.1 & \da{9.5} 2.0 & \dab{3.9} 57.1 & \da{5.5} 4.2 & \dab{0.9} 62.5 & \da{3.5} 3.5 & \dab{1.5} 57.8 & \da{0.6} 9.4 \\
\addlinespace[4pt]
\midrule

\multirow{4}{*}{Reputation} & Baseline & 69.2 & 8.6 & 64.0 & 11.8 & 71.1 & 8.8 & 63.5 & 9.8 \\
\addlinespace[4pt]
& SAL (random) & \dab{0.4} 68.8 & \da{0.9} 7.7 & \dab{0.2} 63.8 & \da{0.8} 11.1 & 71.1 & \ua{0.1} 8.9 & \dab{1.0} 62.5 & \da{0.8} 9.0 \\
\addlinespace[4pt]
& SAL & \dab{0.8} 68.4 & \da{8.4} 0.2 & \dab{1.2} 62.7 & \da{10.0} 1.8 & \dab{0.8} 70.3 & \da{4.9} 3.9 & \dab{1.3} 62.2 & \da{4.9} 4.9 \\
\addlinespace[4pt]
& H-SAL & \dab{1.7} 67.5 & \da{8.3} 0.3 & \dab{1.9} 62.1 & \da{10.1} 1.7 & \dab{1.0} 70.1 & \da{5.5} 3.2 & \dab{2.1} 61.5 & \da{4.9} 4.9 \\
\addlinespace[4pt]
\midrule

\multirow{4}{*}{Engagement} & Baseline & 68.6 & 7.7 & 63.6 & 8.5 & 67.2 & 8.4 & 63.6 & 9.6 \\
\addlinespace[4pt]
& SAL (random) & \dab{0.4} 68.2 & \da{1.0} 6.7 & \dab{0.2} 63.4 & \da{0.9} 7.6 & \dab{0.1} 67.1 & \da{0.7} 7.8 & \dab{0.6} 63.0 & \da{2.0} 7.6 \\
\addlinespace[4pt]
& SAL & \dab{1.1} 67.5 & \da{7.2} 0.5 & \dab{1.1} 62.5 & \da{6.4} 2.1 & \dab{0.4} 66.8 & \da{5.4} 3.1 & \dab{1.3} 62.2 & \da{6.0} 3.6 \\
\addlinespace[4pt]
& H-SAL & \dab{2.0} 66.6 & \da{7.2} 0.5 & \dab{2.5} 61.2 & \da{6.7} 1.8 & \dab{0.7} 66.5 & \da{4.9} 3.6 & \dab{1.9} 61.7 & \da{3.1} 6.5 \\
\addlinespace[4pt]
\bottomrule
\end{tabular}
}

\caption{\textbf{Average results under the user-disjoint split.}
Results are averaged across BERT, Llama-3.1-8B, and Mistral-7B over five bias attributes and four Stack Exchange domains. \textbf{Main} denotes helpfulness prediction accuracy, and \textbf{TPR-Gap} denotes generalized disparity, computed as \(2 \times\) the standard deviation of true positive rates across groups. \textsc{SAL} uses bias-specific ranks, while \textsc{SAL} (random) and H-\textsc{SAL} use \(k=50\) removed directions for all attributes. The helpfulness label is balanced in each split; reputation, engagement, and gender are approximately balanced, while display-name clusters and country retain their natural distributions. Baseline denotes no erasure.}

\label{tab:ste-summary-average-natural-disjoint-k50}
\end{table*}

\subsection{Main Results}

Table~\ref{tab:ste-summary-average-natural-disjoint-k50} reports model-averaged results in the natural user-disjoint setting across BERT, Llama-3.1-8B, and Mistral-7B. We focus on model averages to highlight the overall pattern across domains and bias types; full per-model results appear in Appendix~\ref{app:eachmodel}. Across all five attributes, the random baseline stays close to the original model, with only small changes in TPR-Gap, indicating that the fairness gains of SAL and H-SAL do not come from removing arbitrary directions. Reductions in TPR-Gap are sometimes accompanied by modest drops in main-task accuracy; because H-SAL removes a larger subspace ($k=50$) than SAL (bias-specific rank), it tends to trade slightly more accuracy for fairness, a trade-off we revisit in \S\ref{sec:effect_of_k}.

Overall, \emph{H-SAL is competitive with, and often outperforms, SAL}, even though SAL uses the protected attribute directly. The advantage is clearest for display name, country and reputation in the technical domains, while gender and engagement show a mixed picture. An explanation is that the explicit attribute is often a coarse summary, whereas profile text encodes a richer set of correlated cues, such as place names, affiliations, seniority markers, and stylistic signals. Consistent with this interpretation, the ablation in \S\ref{sec:effect_of_k} shows that H-SAL often continues to improve as more directions are removed, while random removal stays flat.

\paragraph{Display Name.}

Display name is the strongest case for implicit erasure: H-SAL achieves the lowest TPR-Gap in all four domains. On StackOverflow it reduces the gap from 4.9\% to 1.3\%, compared with 2.0\% for SAL, and Mathematics shows the same ordering (4.8\% to 1.6\%, versus 2.8\%). The contrast is sharpest on SuperUser, where the baseline gap is small (4.1\%) and neither random nor SAL reduces it (4.3\% and 4.4\%, respectively), while H-SAL still lowers it to 2.9\%. On English, H-SAL reduces the gap from 9.9\% to 6.2\%, again ahead of SAL at 9.1\%. Profile text therefore provides a consistently useful signal for name-related bias, even where the explicit label does not.

\paragraph{Gender.}
Gender shows the smallest baseline disparities of all five attributes, and on StackOverflow the gap is already negligible (0.3\%), so there is little bias to remove and all methods keep it small. On the remaining domains, H-SAL gives the lowest TPR-Gap: it reduces the gap from 2.8\% to 1.4\% on Mathematics, and produces smaller reductions on SuperUser (11.6\% to 10.4\%) and English (9.9\% to 9.5\%), whereas SAL leaves the gap essentially unchanged. The gender results are thus modest and reflect a weak baseline signal rather than a failure of erasure; main-task accuracy is also lower in the gender setting, indicating an intrinsically harder, attribute-balanced task.

\paragraph{Country.}

For country, H-SAL produces the largest reductions in the two technical domains: it lowers the gap from 11.5\% to 2.0\% on StackOverflow and from 9.7\% to 4.2\% on Mathematics, in both cases well below SAL (3.4\% and 7.1\%). On SuperUser the two methods are close, with SAL marginally lower (3.0\% versus 3.5\%). English is a harder case: the baseline gap (10.1\%) barely moves under any method, including SAL. This pattern suggests that location-related cues are present in user biographies in a sufficiently systematic form on StackOverflow and Mathematics, but that the signal is much weaker, or more entangled with task content, in the English community.

\paragraph{Reputation.}

Reputation is a case where H-SAL and SAL are essentially on par, and both far outperform random removal. H-SAL gives the lowest or tied-lowest TPR-Gap in three of the four domains: it matches SAL on English (both 4.9\%), slightly improves on it on Mathematics (1.7\% versus 1.8\%), and is clearly better on SuperUser (3.2\% versus 3.9\%). StackOverflow is the only domain where SAL is marginally ahead (0.2\% versus 0.3\%). In all cases the gap is reduced substantially from a baseline of roughly 9--12\%, showing that profile text recovers status-related bias about as effectively as the explicit reputation label.

\begin{figure*}[t]
  \centering
  \begin{subfigure}[t]{0.33\textwidth}
    \centering
    \includegraphics[width=\linewidth]{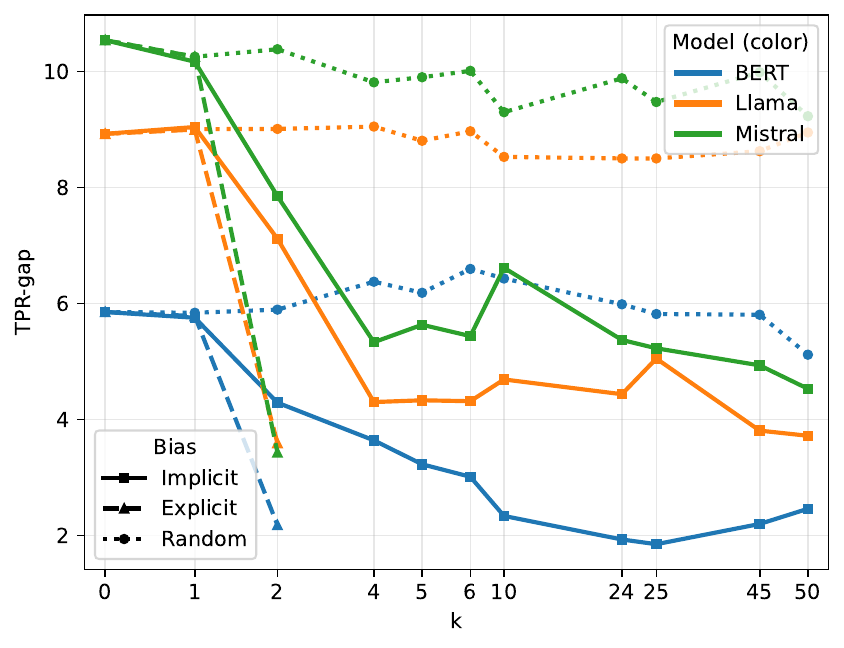}
    \caption{Superuser}
  \end{subfigure}\hfill
  \begin{subfigure}[t]{0.33\textwidth}
    \centering
    \includegraphics[width=\linewidth]{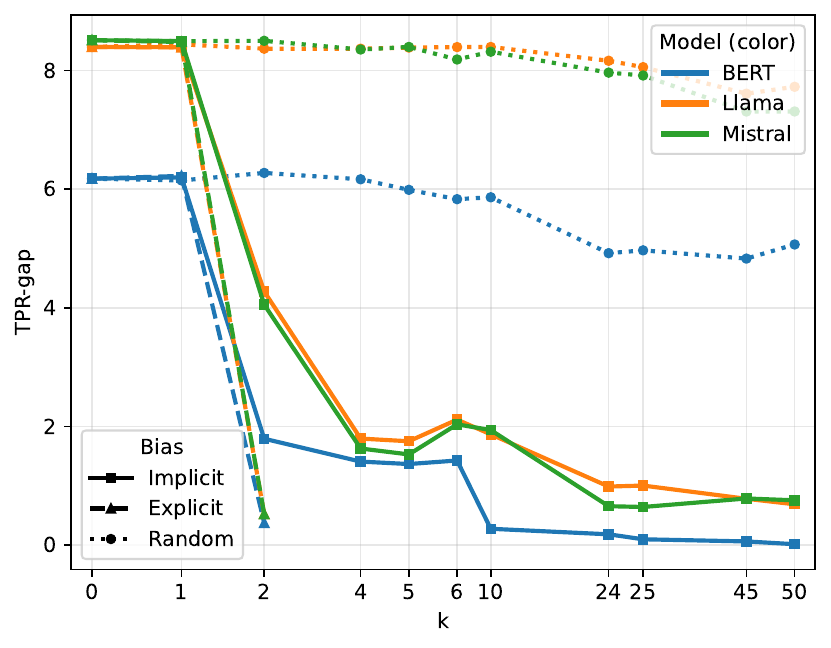}
    \caption{StackOverflow}
  \end{subfigure}\hfill
  \begin{subfigure}[t]{0.33\textwidth}
    \centering
    \includegraphics[width=\linewidth]{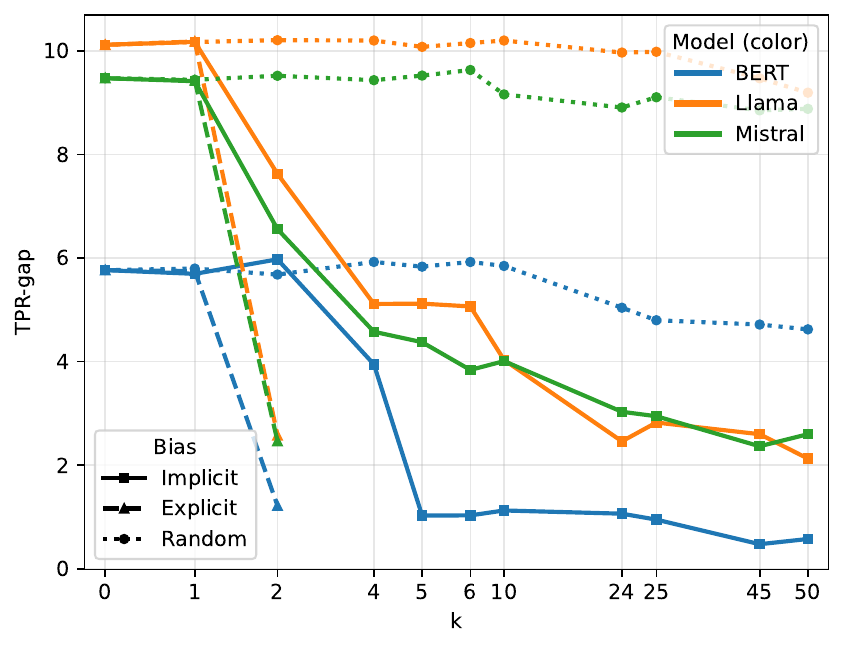}
    \caption{Mathematics}
  \end{subfigure}
\caption{\textbf{Engagement bias reduction across three Stack Exchange communities under the natural-disjoint split.}
Each subfigure plots TPR-Gap against the number of removed directions ($k$) for \texttt{BERT}, \texttt{Mistral-7B}, and \texttt{Llama-3.1-8B}, comparing random, explicit, and implicit \textsc{SAL}. \textbf{TPR-Gap} is the generalized disparity ($2 \times$ StdDev) of true positive rates across groups, equivalent to the difference in true positive rates in the binary case. Lower values indicate smaller disparity.}
  \label{fig:tpr-gap_agaist_k}
\end{figure*}

\paragraph{Engagement.}

Engagement gives a more mixed outcome. H-SAL is best or tied on StackOverflow (both methods reach 0.5\%, from a baseline of 7.7\%) and Mathematics (1.8\% versus 2.1\%), but SAL is stronger on SuperUser (3.1\% versus 3.6\%) and also on English, where SAL reaches 3.6\% while H-SAL only lowers the gap to 6.5\%. Both methods improve substantially over random removal in every domain. Engagement therefore illustrates that the implicit signal is not uniformly superior: when participation-related cues are weakly expressed in self-description, the explicit label can still be the more reliable supervision source.

\subsection{Effect of the Number of Removed Directions}
\label{sec:effect_of_k}

Table~\ref{tab:ste-summary-average-natural-disjoint-k50} reports results under a strong erasure setting, \(k=50\). Figure~\ref{fig:tpr-gap_agaist_k} shows TPR-Gap as \(k\) varies for engagement bias across communities and backbones; additional model-averaged results for \(k \in \{5,10,24,45\}\) are provided in Appendix~\ref{app:different-k}. As \(k\) increases, H-\textsc{SAL} generally reduces TPR-Gap more than the random baseline, indicating that improvements come from removing bias-aligned rather than arbitrary directions. 
This also helps explain why H-\textsc{SAL} can outperform explicit \textsc{SAL}: profile text spreads bias across many correlated directions (Figure~\ref{fig:singular-value-decay}), while the explicit label spans only a narrow subspace.
Since larger \(k\) may also affect helpfulness accuracy, \(k\) controls the fairness--utility trade-off.

\section{Conclusion}

We study fairness in a setting where protected attributes are not directly available, but may still be implicitly encoded in user-authored text. We introduce a multi-domain Stack Exchange-based benchmark that includes both explicit attribute labels and implicit self-description, alongside a method that performs erasure once from implicit self-description rather than requiring a separate run per predefined attribute. We find that implicit self-description often provides a strong debiasing signal, matching or outperforming explicit-label-based erasure across most settings. These results broaden representation-level fairness research and suggest that indirect textual cues can serve as an effective basis for fairness interventions in realistic settings where demographic labels are absent.

\section*{Limitations}

Our study has several limitations. First, although the empirical results are strong, we evaluate only one family of post-hoc debiasing methods, namely SAL and its implicit variant. Other representation-level debiasing methods, such as INLP, RLACE, LEACE, or newer post-hoc editing approaches, may behave differently in the implicit setting. We note, however, that SAL lends itself very naturally to our setup, which is the reason we use it.

Second, our benchmark focuses on helpfulness prediction in Stack Exchange communities. While this provides a realistic and diverse testbed, the findings may not transfer directly to other tasks, domains, or interaction settings. In particular, the relationship between user self-description and socially salient attributes may vary substantially across platforms. We note, however, that our dataset spans four distinct communities, each with substantially different discourse styles.

Third, our implicit signal is restricted to self-description text. This is a deliberate design choice that isolates the role of free-form textual metadata, but it leaves open how other realistic implicit signals, such as writing style, interaction history, or broader profile context, might affect debiasing.

% Fourth, our erasure procedure is applied independently to each backbone model. An interesting direction for future work is to study whether erasure directions or bias-related mechanisms learned in one model can transfer across model scales. Recent work on cross-model circuit alignment suggests that mechanistic information can sometimes be transferred from smaller to larger models through a learned alignment \citep{shao2026differentiable}, although transfer becomes harder as model and architectural gaps increase.

\section*{Ethical Considerations}

Our work is motivated by the fact that protected attributes are often unavailable in practice due to privacy, legal, and data-collection constraints. The goal of our method is not to infer or reconstruct sensitive user information, but to reduce the extent to which models rely on such information when making predictions. At the same time, studying implicit bias necessarily involves attributes that may be socially sensitive or correlated with identity. This creates two ethical risks. First, user-authored profile text may contain personal information, and even aggregate analysis of such text should be handled with care. Second, methods for detecting or removing implicit bias signals could be misused to profile users or infer private attributes. We therefore position our benchmark and methods strictly as tools for fairness evaluation and mitigation, not for demographic inference or user profiling.

A further consideration is that the attributes studied in this paper, especially \textit{display name} and \textit{country}, are proxies rather than ground-truth demographic labels. They should not be interpreted as definitive indicators of identity. Likewise, reducing disparity on a benchmark does not guarantee fairness in all downstream settings, since fairness is multi-dimensional and depends on the deployment context. We therefore view this benchmark as a controlled testbed for studying one important aspect of fairness under missing protected attributes, rather than as a complete solution to fairness in real-world systems.

\section*{Acknowledgements}
We thank the Stack Exchange community members for enriching the internet in a positive, constructive manner. If it were not for them, we would not appreciate as much how peculiar, diverse and complex human thought can be.
The authors acknowledge the use of resources provided by the Isambard-AI National AI Research Resource (AIRR). Isambard-AI is operated by the University of Bristol and is funded by the UK Government’s Department for Science, Innovation and Technology (DSIT) via UK Research and Innovation; and the Science and Technology Facilities Council [ST/AIRR/I-A-I/1023]~\citep{mcintoshsmith2024isambardaileadershipclasssupercomputer}. SS and AK acknowledge the support of the UKRI Frontier grant
EP/Y031350/1.

\bibliography{custom}

\appendix

\newpage

\section{More Details on Justification}
\label{app:justification}

\begin{lemma}
Let $\rv{R}$, $\rv{T}$ and $\rv{S}$ be three scalar random variables. Define:
\begin{equation}
    f(a,b) = \max \{ 0, ab - \sqrt{(1-a^2)(1-b^2)} \}.
\end{equation}
If the correlation between all pairs of the three random variables is non-negative, then:
\begin{equation}
    \rho(\rv{R}, \rv{S}) \ge f(\rho(\rv{R}, \rv{T}), \rho(\rv{T}, \rv{S})).
\end{equation}
    \label{lemma:rho}
\end{lemma}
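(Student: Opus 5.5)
The plan is to reduce the statement to the positive semidefiniteness of the $3\times 3$ correlation matrix of $(\rv{R},\rv{T},\rv{S})$. We may assume all three variables have positive variance, so correlations are defined. We then standardize them, which leaves correlations unchanged. Write $a=\rho(\rv{R},\rv{T})$, $b=\rho(\rv{T},\rv{S})$ and $c=\rho(\rv{R},\rv{S})$, all in $[0,1]$ by hypothesis. If $ab-\sqrt{(1-a^2)(1-b^2)}\le 0$, then $f(a,b)=0$ and the claim $c\ge 0$ is exactly the non-negativity assumption. So the only substantive case is when this quantity is positive, where we must show $c\ge ab-\sqrt{(1-a^2)(1-b^2)}$.

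For that case I would use the projection argument. Decompose the standardized $\rv{R}=a\rv{T}+\rv{E}_R$ and $\rv{S}=b\rv{T}+\rv{E}_S$, where $\rv{E}_R,\rv{E}_S$ are uncorrelated with $\rv{T}$. They have variances $1-a^2$ and $1-b^2$ respectively, since $\mathrm{Var}(\rv{R})=a^2+\mathrm{Var}(\rv{E}_R)=1$. Then
\begin{equation}
c=\mathbb{E}[\rv{R}\rv{S}] = ab + \mathbb{E}[\rv{E}_R\rv{E}_S].
\end{equation}
By Cauchy--Schwarz, $|\mathbb{E}[\rv{E}_R\rv{E}_S]|\le\sqrt{(1-a^2)(1-b^2)}$, which gives $c\ge ab-\sqrt{(1-a^2)(1-b^2)}$. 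Combining this with $c\ge 0$ yields $c\ge f(a,b)$.

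An equivalent route is to expand $\det$ of the correlation matrix, $1-a^2-b^2-c^2+2abc\ge 0$, and solve this quadratic in $c$ to get $c\in[ab-\sqrt{(1-a^2)(1-b^2)},\,ab+\sqrt{(1-a^2)(1-b^2)}]$. I would present the projection version because it avoids checking which principal minors are needed.

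I expect no real obstacle. The only care points are degenerate cases: zero variance, or $a=1$ where $\rv{E}_R=0$; in both, Cauchy--Schwarz still holds trivially. It is also worth noting that the non-negativity hypothesis on $\rho(\rv{R},\rv{S})$ is used only to justify the $\max\{0,\cdot\}$ truncation. The non-negativity of $a$ and $b$ is needed only so that $f$ is evaluated in its stated domain $0\le a,b\le 1$.
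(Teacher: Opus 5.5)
Your proof is correct and complete. The residual decomposition with $\mathrm{Var}(\rv{E}_R)=1-a^2$ and $\mathrm{Var}(\rv{E}_S)=1-b^2$, followed by Cauchy--Schwarz, gives $c\ge ab-\sqrt{(1-a^2)(1-b^2)}$, and the hypothesis $c\ge 0$ covers the truncation. The paper gives no argument of its own for this lemma; it only points to an external Mathematics Stack Exchange discussion. There are two standard justifications of this bound. One is the determinant of the $3\times 3$ correlation matrix, which is your alternative route. The other is geometric: standardized variables are unit vectors in $L^2$, the angles $\theta_{RT}=\arccos a$, $\theta_{TS}=\arccos b$, $\theta_{RS}=\arccos c$ satisfy $\theta_{RS}\le\theta_{RT}+\theta_{TS}$, and this yields $c\ge\cos(\theta_{RT}+\theta_{TS})=ab-\sqrt{(1-a^2)(1-b^2)}$. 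Your projection argument is the algebraic form of that same inner-product geometry. What it buys is a self-contained proof that makes the hypotheses explicit:
\begin{itemize}
\item finite positive variances, which the lemma does not state;
\item the unclipped bound holds with no sign conditions on $a,b$ at all. The angle argument's last step needs $\cos$ to be decreasing, i.e.\ $\theta_{RT}+\theta_{TS}\le\pi$, and that is where $a,b\ge 0$ enter;
\item $\rho(\rv{R},\rv{S})\ge 0$ is genuinely needed for the $\max\{0,\cdot\}$. Take $\rv{T}$ independent of $\rv{R}$ and $\rv{S}=-\rv{R}$: then $a=b=0$ but $c=-1<0=f(0,0)$.
\end{itemize}
One small correction to your aside: the determinant route needs no bookkeeping about principal minors. $\det\ge 0$ is simply a necessary consequence of positive semidefiniteness, which every correlation matrix has.
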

There is an extensive discussion of this lemma on Mathematics Stack Exchange at \url{https://tinyurl.com/pzn4uf58}, including an explanation of why it is true.\noeqref{eq:b}
We note that $f$ is monotonically non-decreasing in both arguments.
Using the definitions in Eq.~\refeq{eq:a}--\refeq{eq:c} and Lemma~\ref{lemma:rho}, together with a composition of $f$ with itself, we obtain:

\begin{proposition}
Assume $\myvec{u}_3^{\top} \myvec{v}_2 \ge 0$ and that $\rho(\myvec{v}_2^{\top} \rv{H}, \myvec{u}_3^{\top}\rv{Z}) \ge 0$.
It holds that
\begin{equation}
    \rho_1 \ge f(\rho_2, f(\rho_3, \myvec{u}_3^{\top} \myvec{v}_2)).
\end{equation}
\label{prop:1}
\end{proposition}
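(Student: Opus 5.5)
The idea is to apply Lemma~\ref{lemma:rho} twice to scalar projections of the whitened vectors, exploiting that whitening makes every unit-vector projection a standardized scalar. Since $\rv{X}$, $\rv{H}$, $\rv{Z}$ have mean zero and identity covariance, for unit vectors $\myvec{a},\myvec{b}$ the scalars $\myvec{a}^{\top}\rv{X}$ and $\myvec{b}^{\top}\rv{H}$ have unit variance. Their correlation is therefore $\myvec{a}^{\top}\mathbb{E}[\rv{X}\rv{H}^{\top}]\myvec{b}$, and likewise for the other pairs. Two consequences follow. First, $\rho(\myvec{u}_2^{\top}\rv{X}, \myvec{v}_2^{\top}\rv{H}) = \rho_2$ and $\rho(\myvec{u}_3^{\top}\rv{H}, \myvec{v}_3^{\top}\rv{Z}) = \rho_3$. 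Second, the correlation between two projections of the \emph{same} whitened vector is just the inner product of the projection directions: $\rho(\myvec{u}_3^{\top}\rv{H}, \myvec{v}_2^{\top}\rv{H}) = \myvec{u}_3^{\top}\myvec{v}_2$. Finally, $\rho_1$ is the maximum of $\myvec{a}^{\top}\mathbb{E}[\rv{X}\rv{Z}^{\top}]\myvec{b}$ over unit $\myvec{a},\myvec{b}$, so it is at least the correlation of any particular pair $(\myvec{a}^{\top}\rv{X}, \myvec{b}^{\top}\rv{Z})$.

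\textbf{Step 1.} Apply Lemma~\ref{lemma:rho} to $R=\myvec{v}_2^{\top}\rv{H}$, $T=\myvec{u}_3^{\top}\rv{H}$, $S=\myvec{v}_3^{\top}\rv{Z}$. The pairwise correlations are $\rho(R,T)=\myvec{u}_3^{\top}\myvec{v}_2\ge 0$ by assumption and $\rho(T,S)=\rho_3\ge0$ as a top singular value. This yields $\rho(\myvec{v}_2^{\top}\rv{H}, \myvec{v}_3^{\top}\rv{Z}) \ge f(\myvec{u}_3^{\top}\myvec{v}_2, \rho_3) = f(\rho_3,\myvec{u}_3^{\top}\myvec{v}_2)$, using the symmetry of $f$.

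\textbf{Step 2.} Apply the lemma again to $R=\myvec{u}_2^{\top}\rv{X}$, $T=\myvec{v}_2^{\top}\rv{H}$, $S=\myvec{v}_3^{\top}\rv{Z}$. Here $\rho(R,T)=\rho_2\ge0$. By Step~1 together with monotonicity of $f$, $\rho(R,S) \ge f(\rho_2, \rho(T,S)) \ge f(\rho_2, f(\rho_3,\myvec{u}_3^{\top}\myvec{v}_2))$. Since $\rho_1\ge\rho(R,S)$ by the variational characterization, the claim follows.

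The main obstacle is the sign hypotheses of the lemma, which requires all three pairwise correlations to be nonnegative.

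In Step~1, the remaining pair is $\rho(\myvec{v}_2^{\top}\rv{H}, \myvec{v}_3^{\top}\rv{Z})$. The stated assumption concerns $\rho(\myvec{v}_2^{\top}\rv{H}, \myvec{u}_3^{\top}\rv{Z})$, which I read as this same quantity, with the subscript naming the projection of $\rv{Z}$ up to the paper's notation. If it is negative, the bound in Step~1 holds trivially anyway, because $f\ge0$ fails to bound a negative number. It is cleanest to invoke the hypothesis directly.

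In Step~2, the remaining pair is $\rho(\myvec{u}_2^{\top}\rv{X}, \myvec{v}_3^{\top}\rv{Z})$, which has no assumption attached. I would handle it by flipping the sign of $\myvec{v}_3$ (and $\myvec{u}_3$ jointly, which preserves $\rho_3$ but would flip $\myvec{u}_3^{\top}\myvec{v}_2$). A cleaner alternative avoids flipping altogether. If this correlation is negative, then $\rho_1 \ge |\rho(\myvec{u}_2^{\top}\rv{X}, \myvec{v}_3^{\top}\rv{Z})| \ge 0$. Otherwise the lemma applies and gives the bound. The negative case, however, only yields $\rho_1\ge 0$ rather than the stated bound. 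So I would try to show that a negative correlation forces $f(\rho_2,\cdot)=0$. This follows from the standard correlation-matrix PSD inequality $\rho(R,S)\ge \rho(R,T)\rho(T,S)-\sqrt{(1-\rho(R,T)^2)(1-\rho(T,S)^2)}$, which holds without any sign hypothesis and is the real content of the lemma. Using this unsigned form in both steps would make the sign issue disappear, provided Step~1's output is nonnegative, which is guaranteed by the $\max\{0,\cdot\}$ in $f$ and the monotonicity of $f$.
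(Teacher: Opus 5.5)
Your proof is correct and is essentially the paper's argument: the same two applications of Lemma~\ref{lemma:rho} (outer triple $\myvec{u}_2^{\top}\rv{X},\myvec{v}_2^{\top}\rv{H},\myvec{v}_3^{\top}\rv{Z}$, inner one with $T=\myvec{u}_3^{\top}\rv{H}$), monotonicity of $f$, whitening to get $\rho(\myvec{u}_3^{\top}\rv{H},\myvec{v}_2^{\top}\rv{H})=\myvec{u}_3^{\top}\myvec{v}_2$, and the variational bound $\rho_1\ge\rho(\myvec{u}_2^{\top}\rv{X},\myvec{v}_3^{\top}\rv{Z})$, taken in the opposite order; your reading of the paper's $\myvec{u}_3^{\top}\rv{Z}$ as $\myvec{v}_3^{\top}\rv{Z}$ is also right. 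Your extra care on signs is justified, since the paper never checks that $\rho(\myvec{u}_2^{\top}\rv{X},\myvec{v}_3^{\top}\rv{Z})\ge 0$ and your fix (the unsigned determinant inequality plus $\rho_1\ge 0$) closes that gap. One aside is backwards, though: a negative $\rho(\myvec{v}_2^{\top}\rv{H},\myvec{v}_3^{\top}\rv{Z})$ does not make the Step~1 bound ``hold trivially.'' Step~1 then fails, and what becomes trivial is the final bound, because the unsigned inequality forces $f(\rho_3,\myvec{u}_3^{\top}\myvec{v}_2)=0$ and $f(\rho_2,0)=0\le\rho_1$.
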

\begin{proof}
We know the correlation of projected random vectors
is maximized when using the first left/right singular vector, and therefore:
    \begin{equation}
    \rho_1 \ge \underbrace{\rho(\myvec{u}_2^{\top}\rv{X}, \myvec{v}_3^{\top}\rv{Z})}_{\hat{\rho}}.
    \end{equation}
    By Lemma~\ref{lemma:rho} it holds that:
    \begin{equation}
    \hat{\rho} \ge f(\rho_2, \rho(\myvec{v}_2^{\top} \rv{H}, \myvec{u}_3^{\top}\rv{Z})).
\end{equation}
    By the monotonicity of $f$ in both arguments over $[0,1]$ and the definition of $\rho_3$ we can use Lemma~\ref{lemma:rho} again (with $\rv{T} = \myvec{u}_3^{\top} \rv{H}$) and replace the second argument
    to get:
    \begin{equation}
        \hat{\rho} \ge f(\rho_2, f(\rho_3,  \rho(\myvec{u}_3^{\top} \rv{H}, \myvec{v}_2^{\top} \rv{H}))),
    \end{equation}
    Since we assume $\rv{H}$ is normalized, it holds that the last $\rho$ argument equals $\myvec{u}_3^{\top} \myvec{v}_2$.
    Putting it all together we obtain that
    \begin{equation}
        \rho_1 \ge f(\rho_2, f(\rho_3, \myvec{u}_3^{\top} \myvec{v}_2))
    \end{equation}
    as required.
\end{proof}

We note that due to the degree of freedom in the sign of the left and right singular vectors in SVD, we can relax the condition that $\myvec{u}_3^{\top} \myvec{v}_2$ and $\rho(\myvec{v}_2^{\top} \rv{H}, \myvec{u}_3^{\top}\rv{Z})$ are both non-negative to them having the same sign.

\section{Dataset Construction Details}
\label{app:data-processing}

\begin{table*}[th!]
\centering
\resizebox{0.99\textwidth}{!}{
\begin{tabular}{llccccc}
\toprule
Dataset & Split & Reputation & Gender & Location & Engagement & DisplayName \\
\midrule
\multirow{3}{*}{\logoso StackOverflow} & Train & 69976 / 69976 & 2894 / 2894 & 45088 / 16428 / 16767 / 35975 / 13877 & 104839 / 104839 & 17087 / 52472 / 27583 / 8349 / 34461 \\
& Validation & 10017 / 10017 & 716 / 716 & 7456 / 1876 / 2168 / 4969 / 1786 & 15059 / 15059 & 2501 / 7799 / 3861 / 1110 / 4763 \\
& Test & 19967 / 19967 & 854 / 854 & 12027 / 4934 / 4001 / 9896 / 3994 & 29933 / 29933 & 4986 / 15058 / 7900 / 2298 / 9692 \\

\addlinespace[3pt]
\midrule

\multirow{3}{*}{\logoms Mathematics} & Train & 33273 / 33273 & 3616 / 3616 & 34836 / 7755 / 8060 / 12447 / 14074 & 32913 / 32913 & 5949 / 4021 / 15362 / 26265 / 14949 \\
& Validation & 4770 / 4770 & 1112 / 1112 & 5345 / 942 / 1563 / 1544 / 1727 & 4619 / 4619 & 797 / 715 / 2250 / 3694 / 2084 \\
& Test & 9452 / 9452 & 1140 / 1140 & 11788 / 2692 / 2082 / 3762 / 3963 & 9361 / 9361 & 1457 / 1193 / 4204 / 7784 / 4266 \\

\addlinespace[3pt]
\midrule

\multirow{3}{*}{\logosu SuperUser} & Train & 9601 / 9601 & 456 / 456 & 9617 / 5257 / 2354 / 2587 / 2044 & 9251 / 9251 & 2213 / 2052 / 4652 / 3486 / 6799 \\
& Validation & 1379 / 1379 & 194 / 194 & 1121 / 1100 / 377 / 305 / 313 & 1335 / 1335 & 279 / 282 / 644 / 501 / 1052 \\
& Test & 2746 / 2746 & 188 / 188 & 1998 / 1856 / 1258 / 851 / 566 & 2640 / 2640 & 581 / 514 / 1409 / 1009 / 1979 \\

\addlinespace[3pt]
\midrule

\multirow{3}{*}{\logoes English} & Train & 2882 / 2882 & 616 / 616 & 4425 / 4770 / 1276 / 736 / 308 & 2838 / 2838 & 1798 / 579 / 682 / 1474 / 1231 \\
& Validation & 415 / 415 & 278 / 278 & 490 / 766 / 90 / 59 / 6 & 406 / 406 & 257 / 80 / 97 / 237 / 159 \\
& Test & 812 / 812 & 266 / 266 & 1671 / 1107 / 329 / 118 / 161 & 806 / 806 & 407 / 230 / 230 / 459 / 298 \\

\bottomrule
\end{tabular}
}
\caption{\textbf{Number of users in each split by protected-attribute class under the natural disjoint split.} For binary attributes, entries are reported as class counts in the form $Z=0$ / $Z=1$. For location, entries are reported as counts for each country class. For DisplayName, entries are reported as counts for each of the display name clusters.}
\label{tab:user-counts-natural-disjoint}
\end{table*}

\subsection{Data Extraction and Preprocessing}
\label{app:preprocessing-details}

We start from the public Stack Exchange data dumps for each community and construct answer-level examples by joining post- and user-level tables. For each answer, we retrieve: (i) the answer text and its score; (ii) the associated question context, such as title and tags when available; and (iii) user metadata including \texttt{UserId}, \texttt{DisplayName}, \texttt{AboutMe}, \texttt{Location}, \texttt{Reputation}, and interaction statistics used to compute engagement. These fields are aligned into a post-level table where each row corresponds to one answer and contains both the main-task text and the user profile text $\mathcal{H}$.

The main-task text is constructed by concatenating the cleaned question title and cleaned answer body into a single prompt of the form \texttt{Question: <title>, Answer: <answer>}. This text is then encoded to obtain the representation $\rv{X}$ used for helpfulness prediction and debiasing.

All textual fields, including question and answer content, self-descriptions, and location strings, are processed with a shared cleaning pipeline. We remove HTML markup and normalize whitespace and formatting to obtain cleaned text fields. We discard entries where either the main answer text or the user profile text is empty after cleaning, and filter out very short texts that provide insufficient signal for representation learning or probing.\footnote{We apply the same validity checks across domains to keep preprocessing consistent.}

Table~\ref{tab:user-counts-natural-disjoint} provides statistics on the dataset.

\subsection{Attribute Construction}
\label{app:attribute-details}

We study five user attributes as sources of bias in helpfulness prediction: \textbf{display name}, \textbf{gender}, \textbf{country}, \textbf{reputation}, and \textbf{engagement}. Three of these (\textbf{display name}, \textbf{gender}, and \textbf{country}) are socially salient profile attributes, while the other two (\textbf{reputation}, \textbf{engagement}) are platform-derived status signals. We do not treat them as definitive ground-truth demographics; instead, they define evaluation groups that may correlate with writing style, topical focus, visibility, and community norms, and may therefore be spuriously predictive of helpfulness even when answer quality is comparable.

\paragraph{Display Name.}

Display names can encode cultural, linguistic, or stylistic cues that models may exploit. We simulate this source of bias by clustering user names in embedding space and treating the resulting clusters as evaluation groups. We select the number of clusters using the average silhouette score and use five clusters in our experiments.

\paragraph{Country.}
Country labels are derived from users' free-form location strings. Because these strings are noisy and heterogeneous, we normalize them to standardized country labels using a location-resolution pipeline and retain only cases that resolve unambiguously to a single country. To ensure sufficient support per group, we restrict each domain to its most frequent countries after normalization. We provide the resulting country labels in Table~\ref{tab:country-sets}.

\begin{table}[th!]
\centering
\resizebox{\columnwidth}{!}{
\begin{tabular}{ll}
\toprule
Dataset & Retained country labels \\
\midrule
Mathematics & USA, FRA, CAN, DEU, GBR \\
StackOverflow & USA, IND, DEU, GBR, CAN \\
English & GBR, USA, CAN, IND, ITA \\
SuperUser & USA, GBR, CAN, DEU, IND \\
\bottomrule
\end{tabular}
}
\caption{\textbf{Country labels retained for each Stack Exchange dataset.}
Country labels are obtained by normalizing users' free-form location strings and retaining the five most frequent countries in each dataset.}
\label{tab:country-sets}
\end{table}

\paragraph{Gender.}
Gender is inferred from users' \texttt{DisplayName}. We extract likely first names and match them to male and female name lists from the Office for National Statistics' \emph{Baby names in England and Wales: from 1996} dataset \citep{ons_babynames_2025}. We keep only entries with an unambiguous match. Gender is a widely studied demographic attribute in fairness and bias evaluation, and display names provide a common proxy signal through which such bias may enter text-based systems.

\paragraph{Reputation and engagement.}
Reputation is a platform-native status score summarizing prior community feedback, while engagement measures participation intensity based on historical interaction statistics. Although both are correlated with user experience, they may also act as socially salient status signals and induce halo effects, causing models to favor high-status or highly active users independently of answer quality.

\paragraph{Explicit vs.\ implicit settings.}
The benchmark supports both \emph{explicit} and \emph{implicit} fairness settings. In the explicit setting, the attribute label $\rv{Z}$ is directly available to the debiasing method. In the implicit setting, only the user profile text $\mathcal{H}$ is available, and $\rv{Z}$ is used solely for evaluation. Our main experiments focus on the implicit setting, which is the natural target for latent concept and attribute erasure.

\paragraph{Display-name cluster selection.}
To choose the number of display-name clusters, we compute the average silhouette score for candidate values of $k$. For each point $i$, let $a(i)$ be the average distance to points in its own cluster, and let $b(i)$ be the minimum average distance to points in any other cluster. The silhouette score is
\[
s(i)=\frac{b(i)-a(i)}{\max\{a(i),b(i)\}},
\]
and the average silhouette score for $k$ clusters is
\[
\bar{s}(k)=\frac{1}{n}\sum_{i=1}^{n}s(i).
\]
We choose
\[
k^*=\arg\max_{k\in\mathcal{K}}\bar{s}(k),
\]
which favors clusters that are compact and well separated. This procedure selects five display-name clusters in our setting.

\subsection{Evaluation Split Construction}
\label{app:split-details}

All splits in our benchmark are constructed to be user-disjoint: no \texttt{UserId} appears in more than one split. This ensures that both probe performance and fairness metrics reflect generalization to unseen users rather than recurring user-specific cues. We use a 60/10/30 train--validation--test ratio by post counts, while enforcing user disjointness to prevent leakage through repeated users.

The helpfulness label $\rv{Y}$ is balanced in each split to avoid majority-class prediction. For the attribute label $\rv{Z}$, reputation, engagement, and gender are approximately balanced where feasible. DisplayName clusters and country retain their natural distributions because they are highly imbalanced and long-tailed; fully balancing them would discard most examples and yield very small subsets.

\paragraph{Selected thresholds.}
Table~\ref{tab:selected-thresholds} reports the dataset-specific thresholds used to binarize helpfulness, reputation, and engagement. Helpfulness thresholds are chosen according to the answer-score distribution of each dataset so that the downstream task label can be balanced in each split. Reputation and engagement thresholds are selected to form non-degenerate status groups with sufficient support for user-disjoint evaluation.

\begin{table}[th!]
\centering
\begin{tabular*}{\columnwidth}{@{\extracolsep{\fill}}lccc}
\toprule
Dataset & \makecell{Help.\\thr.} & \makecell{Rep.\\thr.} & \makecell{Eng.\\thr.} \\
\midrule
English & 3 & 185 & 6 \\
Mathematics & 2 & 466 & 22 \\
StackOverflow & 2 & 2498 & 246 \\
SuperUser & 2 & 121 & 121 \\
\bottomrule
\end{tabular*}
\caption{\textbf{Selected thresholds for binarizing labels and status attributes.}
Help., Rep., and Eng. denote helpfulness, reputation, and engagement, respectively.
Helpfulness is derived from answer score. Reputation and engagement are user-level status attributes.}
\label{tab:selected-thresholds}
\end{table}

\subsection{Example Construction}
\label{app:example-construction}
The main task is binary helpfulness prediction from Stack Exchange question--answer data. Each example is constructed at the answer level. The task input $\rv{X}$ is derived from a textual prompt formed by concatenating the question title and the answer text (e.g., ``Question: \ldots\ Answer: \ldots''). The implicit textual signal $\mathcal{H}$ is the user's profile text, taken from the \texttt{AboutMe} field. The target attribute $\rv{Z}$ is one of \textit{DisplayName}, \textit{Country}, \textit{Gender}, \textit{Reputation}, or \textit{Engagement}, depending on the experiment.

\section{Experimental Setup Details}
\subsection{Classifier and Training Details}
\label{app:classifier-details}

For downstream helpfulness prediction, we train a logistic regression classifier on top of the projected representation. We use the \texttt{scikit-learn} implementation with \texttt{max\_iter=1000} and \texttt{random\_state=0}; all other hyperparameters are left at their default values unless otherwise noted.

We use the same classifier architecture and training procedure for all experimental conditions: \textbf{Baseline} (no debiasing), \textbf{SAL (random)}, \textbf{SAL}, and \textbf{H-SAL}. Keeping the downstream classifier fixed ensures that observed differences in main-task accuracy and fairness metrics are attributable to the representation intervention itself rather than to differences in model capacity or optimization.

In the \textbf{SAL (random)} condition, we remove the same number of directions as in the corresponding debiasing setup, but select those directions at random instead of deriving them from either the explicit attribute label $\rv{Z}$ or the implicit profile representation $\rv{H}$. This condition serves as a control: if random removal produces little change while explicit or implicit erasure substantially reduces disparity, this supports the interpretation that the learned subspace is specifically aligned with the bias signal rather than reflecting an arbitrary loss of information.

For each condition, the classifier is trained on the transformed training representations and evaluated on the correspondingly transformed test representations. This design isolates the contribution of the erasure step while preserving a common prediction pipeline across all experiments.

\subsection{Choice of Erased Dimensionality}
\label{app:k-details}

A key hyperparameter in concept erasure is the number of removed directions, $k$, which controls the strength of the intervention. Smaller values of $k$ typically preserve more task-relevant information, while larger values can remove a broader bias-aligned subspace at the cost of some main-task performance. We therefore treat $k$ as a fairness--utility trade-off parameter rather than as a fixed universal setting.

In the main experiments, we report results with $k=50$, which corresponds to a relatively strong erasure setting. To assess sensitivity to this choice, we additionally evaluate $k \in \{5, 10,24,45\}$ and report the resulting trends in \S\ref{sec:effect_of_k}. These results show how the strength of erasure affects both fairness and accuracy, and allow practitioners to choose an operating point appropriate for their application.

\subsection{Model Size and Compute}
\label{app:model-size-compute}

We use three backbone encoders with different model scales: BERT-base-uncased with approximately 110M parameters, Llama-3.1-8B with approximately 8B parameters, and Mistral-7B-v0.3 with approximately 7B parameters. We do not fine-tune these backbone models; they are used for representation extraction, followed by linear probing and erasure-based evaluation. Experiments were run on NVIDIA GH200 GPU nodes.

\paragraph{Cross-model transfer.}
Our experiments apply erasure separately to each backbone model and do not assume a shared erasure subspace across model families or scales. A natural direction for future work is to study whether erasure directions or bias-related mechanisms learned in one model can transfer to another model. Recent work on cross-model circuit alignment suggests that mechanistic information can sometimes be transferred from smaller to larger models through a learned alignment \citep{shao2026dfa}, although transfer becomes harder as model and architectural gaps increase.

\section{Evaluation and Analysis}

\subsection{Evaluation Metrics}
\label{app:evaluation-metrics}
To evaluate the balance between solving $\rv{Y}$ (predicting helpfulness) and debiasing $\rv{X}$, we use accuracy (for $\rv{Y}$) and an equal opportunity gap measure (reduced to TPR-Gap in the case of binary $\rv{Z}$;  \citealt{DeArteaga-2019-biasbios}).

More specifically, for a specific class $c$ in the main task (\textit{Helpful} or \textit{Unhelpful}), and a protected attribute $g$ (a specific location or a reputation group), the disparity is defined as:
\begin{equation}
    \text{Gap}_c = 2 \times \sigma(\{\text{TPR}_{c, g} \mid g \in Z\})
\end{equation}
\noindent where $\text{TPR}_{c, g} = P(\hat{\rv{Y}}=c \mid \rv{Y}=c, \rv{Z}=g)$ and $\sigma$ calculates the standard deviation. The r.v. $\hat{\rv{Y}}$ stands for the prediction, and the probability is estimated during testing based on data.\footnote{Scaling by two allows the metric to provide a unified scale for comparing both binary and multiclass attributes.}
In the case of binary $\rv{Z}$, it trivially holds $\text{Gap}_c = |\text{TPR}_0 - \text{TPR}_1|$
where $\text{TPR}_g$ is the true positive rate for
class $g \in \{ 0, 1 \}$.

For multiclass sensitive attributes (for example, five locations), we generalize the metric by calculating the root mean square (RMS) deviation of the true positive rates for each group from the global mean:
\begin{equation}
    \text{Gap}_c = 2 \times \sqrt{\frac{1}{|\{g\}|} \sum_{g} (\text{TPR}_{c,g} - \mu_c)^2}
\end{equation}
\noindent where $\mu_c$ is the average TPR across all groups for class $c$. This formulation captures the overall variation in model performance across all groups simultaneously.
Then, to obtain a single fairness scalar for the model, we aggregate the gaps calculated for every class $c$ using RMS again:
\begin{equation}
    \text{Total Fairness Score} = \sqrt{ \frac{1}{|\{ c \}|} \sum_{c} (\text{Gap}_c)^2 }
\end{equation}

This aggregation ensures that the model is penalized for disparities in both the positive and negative classes of the main task.

\subsection{Predictability of $\rv{Z}$ from $\rv{H}$: Validation of Implicit Bias}

\label{sec:validation}

As discussed in \S\ref{section:method}, our method assumes that the profile representation $\rv{H}$ contains information about the target attribute $\rv{Z}$; otherwise, there would be little attribute-related signal to erase. We validate this assumption by probing $\rv{Z}$ from $\rv{H}$ on the same user-disjoint datasets used in our main experiments. Reputation, engagement, and gender are approximately balanced to make the probing task less sensitive to majority-class prediction, while \textbf{DisplayName} clusters and country retain their natural distributions because they are highly imbalanced and long-tailed. Because the probing split is also user-disjoint, no \texttt{UserId} and corresponding \texttt{Self-description} appears in more than one split, preventing leakage of user-specific profile information between train, validation, and test sets. Table~\ref{fig:accuracy-on-self-description} shows that profile-text embeddings encode substantial attribute-related information, providing a concrete motivation for textual erasure in representation space.

\begin{table*}[!t]
\centering
\resizebox{0.88\textwidth}{!}{
\begin{tabular}{ll rrrrr}
\toprule
Dataset & Language Model & \multicolumn{4}{c}{Probing Accuracy} \\
& & Reputation & Gender & Location & Engagement & DisplayName \\
\midrule

\multirow{3}{*}{\logoso StackOverflow} & BERT & 67.4 & 74.2 & 51.0 & 67.7 & 38.1 \\
 & Llama3.1-8B & 68.0 & 74.1 & 60.6 & 67.0 & 29.9 \\
 & Mistral-7B & 67.4 & 73.2 & 58.0 & 66.9 & 26.5 \\
\midrule

\multirow{3}{*}{\logoms Mathematics} & BERT & 73.1 & 74.9 & 58.2 & 68.0 & 40.1 \\
 & Llama3.1-8B & 71.8 & 72.7 & 64.6 & 66.0 & 31.1 \\
 & Mistral-7B & 70.6 & 70.5 & 63.4 & 64.5 & 25.8 \\
\midrule

\multirow{3}{*}{\logosu SuperUser} & BERT & 64.0 & 66.0 & 48.0 & 63.6 & 32.9 \\
 & Llama3.1-8B & 63.0 & 63.2 & 56.2 & 62.1 & 25.7 \\
 & Mistral-7B & 61.7 & 66.0 & 55.3 & 62.0 & 22.7 \\
\midrule

\multirow{3}{*}{\logoes English} & BERT & 69.0 & 61.9 & 57.9 & 66.6 & 25.9 \\
 & Llama3.1-8B & 69.5 & 68.7 & 67.1 & 66.1 & 23.8 \\
 & Mistral-7B & 69.3 & 61.9 & 64.4 & 67.1 & 22.1 \\
\bottomrule

\end{tabular}
}
\caption{\textbf{Predictability of protected attributes from profile-text representations.} We report test accuracy for predicting the protected attribute $\rv{Z}$ from the representation $\rv{H}$ of users' self-descriptions. Higher accuracy indicates that the embedding retains more information about the attribute, implying stronger attribute relation to textual representations. For Location and DisplayName, there are five classes.}
\label{fig:accuracy-on-self-description}
\end{table*}

\section{Additional Results}

\label{app:additional_results}

This appendix provides additional results omitted from the main text for space.
\S\ref{app:eachmodel} reports per-model results under the main erasure setting, corresponding to the model-averaged results in the main paper.
\S\ref{app:different-k} reports model-averaged sensitivity to the number of removed directions \(k\), illustrating the fairness--utility trade-off as erasure strength varies.

\subsection{Per-Model Results}
\label{app:eachmodel}

The main text reports results averaged across three backbone models. Here, we provide the corresponding per-model results for BERT (Table~\ref{tab:app-bert-main}), Llama (Table~\ref{tab:app-llama-main}), and Mistral (Table~\ref{tab:app-mistral-main}) under the same user-disjoint split and main erasure setting. For implicit H-\textsc{SAL}, we use \(k=50\), which gives the strongest debiasing effect in our experiments. \textsc{SAL} uses bias-specific ranks, corresponding to the number of explicit attribute groups. \textsc{SAL} (random) uses the same \(k\) as H-\textsc{SAL}, but removes randomly selected directions as a control.

\begin{table*}[!t]
\centering
\resizebox{\textwidth}{!}{
\begin{tabular}{ll rr  rr  rr  rr}
\toprule
Bias Type & Method 
& \multicolumn{2}{c}{\textcolor{black}{\logoso \textbf{StackOverflow}}} 
& \multicolumn{2}{c}{\textcolor{black}{\logoms \textbf{Mathematics}}} 
& \multicolumn{2}{c}{\textcolor{black}{\logosu \textbf{SuperUser}}} 
& \multicolumn{2}{c}{\textcolor{black}{\logoes \textbf{English}}} \\
& & Main & TPR-Gap & Main & TPR-Gap & Main & TPR-Gap & Main & TPR-Gap \\
\midrule
\multirow{4}{*}{DisplayName} & Baseline & 68.4 & 5.9 & 63.5 & 4.9 & 74.9 & 2.9 & 68.2 & 8.3 \\
\addlinespace[4pt]
& SAL (random) & \dab{0.5} 68.0 & \da{1.8} 4.1 & \dab{0.4} 63.2 & \da{1.5} 3.4 & \dab{0.5} 74.5 & \ua{0.3} 3.2 & \dab{0.7} 67.5 & \da{0.3} 8.0 \\
\addlinespace[4pt]
& SAL & \dab{0.6} 67.8 & \da{4.3} 1.6 & \dab{1.0} 62.5 & \da{1.7} 3.2 & \dab{0.3} 74.6 & \ua{0.6} 3.5 & \uag{0.2} 68.5 & \da{0.5} 7.8 \\
\addlinespace[4pt]
& H-SAL & \dab{1.5} 66.9 & \da{5.3} 0.6 & \dab{1.3} 62.3 & \da{4.3} 0.6 & \dab{1.1} 73.8 & \da{1.4} 1.5 & \dab{2.0} 66.3 & \da{4.1} 4.1 \\
\addlinespace[4pt]
\midrule

\multirow{4}{*}{Gender} & Baseline & 55.7 & 0.5 & 57.3 & 2.9 & 58.2 & 6.1 & 53.9 & 6.7 \\
\addlinespace[4pt]
& SAL (random) & \dab{0.8} 55.0 & \ua{1.2} 1.7 & \dab{0.5} 56.8 & \da{0.6} 2.2 & \dab{3.5} 54.8 & \da{1.7} 4.4 & \dab{1.5} 52.4 & \ua{2.3} 8.9 \\
\addlinespace[4pt]
& SAL & \uag{0.4} 56.1 & \ua{0.7} 1.2 & \uag{0.4} 57.6 & \da{1.1} 1.8 & \uag{0.5} 58.8 & \da{1.0} 5.0 & \dab{0.2} 53.8 & \ua{1.2} 7.8 \\
\addlinespace[4pt]
& H-SAL & \dab{2.5} 53.3 & \ua{0.3} 0.8 & \dab{3.2} 54.1 & \da{2.0} 0.9 & \dab{2.4} 55.9 & \ua{0.7} 6.7 & \dab{2.6} 51.3 & \ua{2.8} 9.5 \\
\addlinespace[4pt]
\midrule

\multirow{4}{*}{Country} & Baseline & 66.4 & 10.7 & 60.0 & 12.8 & 63.5 & 3.6 & 61.3 & 13.0 \\
\addlinespace[4pt]
& SAL (random) & \dab{0.4} 66.0 & \da{1.3} 9.3 & \dab{0.4} 59.5 & \da{1.8} 11.1 & 63.6 & \da{0.9} 2.7 & 61.3 & \da{3.0} 9.9 \\
\addlinespace[4pt]
& SAL & \dab{0.3} 66.1 & \da{7.2} 3.5 & \dab{0.1} 59.9 & \da{1.9} 10.9 & \dab{0.3} 63.2 & \da{1.2} 2.3 & \dab{0.5} 60.8 & \da{1.1} 11.9 \\
\addlinespace[4pt]
& H-SAL & \dab{2.2} 64.2 & \da{9.2} 1.5 & \dab{4.0} 56.0 & \da{7.9} 5.0 & \dab{0.6} 62.9 & \da{0.6} 3.0 & \dab{1.6} 59.7 & \da{6.7} 6.3 \\
\addlinespace[4pt]
\midrule

\multirow{4}{*}{Reputation} & Baseline & 68.4 & 6.3 & 63.5 & 7.3 & 74.9 & 6.6 & 68.2 & 12.4 \\
\addlinespace[4pt]
& SAL (random) & \dab{0.5} 68.0 & \da{1.3} 4.9 & \dab{0.4} 63.2 & \da{0.9} 6.3 & \dab{0.5} 74.5 & \ua{0.5} 7.1 & \dab{0.7} 67.5 & \da{3.2} 9.2 \\
\addlinespace[4pt]
& SAL & \dab{0.6} 67.8 & \da{5.8} 0.4 & \dab{0.8} 62.8 & \da{6.0} 1.3 & \dab{0.8} 74.1 & \da{5.3} 1.3 & \dab{1.7} 66.5 & \da{6.2} 6.2 \\
\addlinespace[4pt]
& H-SAL & \dab{1.5} 66.9 & \da{6.2} 0.1 & \dab{1.3} 62.3 & \da{6.8} 0.5 & \dab{1.1} 73.8 & \da{4.0} 2.6 & \dab{2.0} 66.3 & \da{8.0} 4.4 \\
\addlinespace[4pt]
\midrule

\multirow{4}{*}{Engagement} & Baseline & 67.5 & 6.2 & 63.0 & 5.8 & 69.7 & 5.9 & 67.5 & 13.4 \\
\addlinespace[4pt]
& SAL (random) & \dab{0.6} 66.9 & \da{1.1} 5.1 & \dab{0.3} 62.7 & \da{1.1} 4.6 & \dab{0.1} 69.6 & \da{0.7} 5.1 & \dab{0.4} 67.1 & \da{2.1} 11.4 \\
\addlinespace[4pt]
& SAL & \dab{0.8} 66.7 & \da{5.8} 0.4 & \dab{0.5} 62.5 & \da{4.5} 1.2 & \uag{0.1} 69.8 & \da{3.7} 2.2 & \dab{0.4} 67.1 & \da{7.0} 6.5 \\
\addlinespace[4pt]
& H-SAL & \dab{1.7} 65.8 & \da{6.2} 0.0 & \dab{1.8} 61.2 & \da{5.2} 0.6 & 69.7 & \da{3.4} 2.5 & \dab{1.1} 66.4 & \da{6.1} 7.4 \\
\addlinespace[4pt]
\bottomrule
\end{tabular}
}
\caption{\textbf{BERT results under the main erasure setting.}
\textbf{Main} denotes helpfulness prediction accuracy, and \textbf{TPR-Gap} denotes generalized disparity, computed as \(2 \times\) the standard deviation of true positive rates across groups. \textsc{SAL} uses bias-specific ranks, while \textsc{SAL} (random) and H-\textsc{SAL} use \(k=50\) removed directions for all attributes. The helpfulness label is balanced in each split; reputation, engagement, and gender are approximately balanced, while display-name clusters and country retain their natural distributions. Baseline denotes no erasure.}
\label{tab:app-bert-main}
\end{table*}

\begin{table*}[!t]
\centering
\resizebox{\textwidth}{!}{
\begin{tabular}{ll rr  rr  rr  rr}
\toprule
Bias Type & Method 
& \multicolumn{2}{c}{\textcolor{black}{\logoso \textbf{StackOverflow}}} 
& \multicolumn{2}{c}{\textcolor{black}{\logoms \textbf{Mathematics}}} 
& \multicolumn{2}{c}{\textcolor{black}{\logosu \textbf{SuperUser}}} 
& \multicolumn{2}{c}{\textcolor{black}{\logoes \textbf{English}}} \\
& & Main & TPR-Gap & Main & TPR-Gap & Main & TPR-Gap & Main & TPR-Gap \\
\midrule
\multirow{4}{*}{DisplayName} & Baseline & 69.7 & 4.5 & 64.2 & 5.0 & 69.2 & 6.0 & 61.0 & 10.2 \\
\addlinespace[4pt]
& SAL (random) & \dab{0.4} 69.3 & \da{0.2} 4.3 & \uag{0.1} 64.3 & \da{0.4} 4.6 & \uag{0.2} 69.5 & \ua{0.7} 6.7 & \dab{2.0} 59.1 & \da{1.9} 8.3 \\
\addlinespace[4pt]
& SAL & \dab{0.8} 68.8 & \da{1.9} 2.6 & \dab{1.3} 62.8 & \da{2.4} 2.6 & \dab{0.1} 69.1 & \ua{1.3} 7.3 & \dab{0.5} 60.5 & \da{2.6} 7.6 \\
\addlinespace[4pt]
& H-SAL & \dab{1.7} 68.0 & \da{2.9} 1.5 & \dab{2.2} 62.0 & \da{3.1} 1.9 & \dab{1.3} 68.0 & \da{0.6} 5.4 & \dab{1.8} 59.2 & \da{2.5} 7.7 \\
\addlinespace[4pt]
\midrule

\multirow{4}{*}{Gender} & Baseline & 56.6 & 0.0 & 54.2 & 2.2 & 58.5 & 13.5 & 54.9 & 13.8 \\
\addlinespace[4pt]
& SAL (random) & \dab{0.3} 56.3 & \ua{0.4} 0.4 & \dab{0.1} 54.1 & \ua{1.4} 3.6 & 58.5 & 13.5 & \dab{0.9} 53.9 & \ua{0.7} 14.5 \\
\addlinespace[4pt]
& SAL & \dab{0.4} 56.1 & \ua{0.5} 0.5 & \uag{0.4} 54.5 & \ua{1.5} 3.6 & 58.5 & \ua{1.0} 14.5 & \uag{0.4} 55.3 & \da{1.7} 12.1 \\
\addlinespace[4pt]
& H-SAL & \dab{1.8} 54.7 & \ua{3.4} 3.4 & \dab{1.6} 52.5 & \da{0.6} 1.6 & \dab{0.5} 58.0 & \da{0.1} 13.3 & \dab{2.8} 52.1 & \da{4.2} 9.6 \\
\addlinespace[4pt]
\midrule

\multirow{4}{*}{Country} & Baseline & 67.9 & 12.8 & 61.7 & 8.8 & 63.4 & 8.3 & 57.8 & 6.5 \\
\addlinespace[4pt]
& SAL (random) & \dab{0.2} 67.6 & \da{0.1} 12.7 & \dab{0.1} 61.5 & 8.8 & 63.3 & \da{0.2} 8.1 & \uag{0.4} 58.2 & \ua{1.2} 7.8 \\
\addlinespace[4pt]
& SAL & \dab{0.6} 67.3 & \da{10.0} 2.8 & \dab{0.1} 61.6 & \da{3.4} 5.4 & \dab{0.1} 63.3 & \da{5.7} 2.6 & \uag{1.0} 58.8 & \da{0.1} 6.5 \\
\addlinespace[4pt]
& H-SAL & \dab{2.1} 65.8 & \da{10.8} 2.0 & \dab{3.9} 57.8 & \da{4.5} 4.3 & \dab{0.8} 62.6 & \da{5.2} 3.0 & \dab{0.7} 57.1 & \ua{4.4} 10.9 \\
\addlinespace[4pt]
\midrule

\multirow{4}{*}{Reputation} & Baseline & 69.7 & 9.9 & 64.2 & 13.9 & 69.2 & 10.8 & 61.0 & 7.8 \\
\addlinespace[4pt]
& SAL (random) & \dab{0.4} 69.3 & \da{0.8} 9.1 & \uag{0.1} 64.3 & \da{0.4} 13.5 & \uag{0.2} 69.4 & 10.7 & \dab{2.0} 59.1 & \ua{0.5} 8.3 \\
\addlinespace[4pt]
& SAL & \dab{1.0} 68.6 & \da{9.7} 0.2 & \dab{1.6} 62.6 & \da{12.4} 1.5 & \dab{1.1} 68.2 & \da{4.8} 6.0 & \dab{1.0} 60.0 & \da{5.1} 2.7 \\
\addlinespace[4pt]
& H-SAL & \dab{1.7} 68.0 & \da{9.6} 0.3 & \dab{2.2} 62.0 & \da{11.5} 2.4 & \dab{1.2} 68.0 & \da{5.6} 5.2 & \dab{1.8} 59.2 & \da{4.6} 3.2 \\
\addlinespace[4pt]
\midrule

\multirow{4}{*}{Engagement} & Baseline & 69.2 & 8.4 & 63.9 & 10.1 & 65.4 & 8.9 & 60.8 & 6.7 \\
\addlinespace[4pt]
& SAL (random) & \dab{0.3} 68.9 & \da{0.7} 7.7 & \dab{0.2} 63.7 & \da{0.9} 9.2 & \dab{0.2} 65.2 & 9.0 & \dab{0.1} 60.7 & \da{1.4} 5.3 \\
\addlinespace[4pt]
& SAL & \dab{1.3} 67.9 & \da{7.9} 0.5 & \dab{1.3} 62.5 & \da{7.5} 2.6 & \dab{0.7} 64.8 & \da{5.3} 3.6 & \dab{0.9} 59.9 & \da{5.6} 1.2 \\
\addlinespace[4pt]
& H-SAL & \dab{2.2} 67.0 & \da{7.7} 0.7 & \dab{2.8} 61.0 & \da{8.0} 2.1 & \dab{0.9} 64.6 & \da{5.2} 3.7 & \dab{1.7} 59.1 & \da{1.7} 5.0 \\
\addlinespace[4pt]
\bottomrule
\end{tabular}
}
\caption{\textbf{Llama-3.1-8B results under the main erasure setting.}
Results follow the same user-disjoint evaluation protocol and method definitions as Table~\ref{tab:app-bert-main}.}
\label{tab:app-llama-main}
\end{table*}

\begin{table*}[!t]
\centering
\resizebox{\textwidth}{!}{
\begin{tabular}{ll rr  rr  rr  rr}
\toprule
Bias Type & Method 
& \multicolumn{2}{c}{\textcolor{black}{\logoso \textbf{StackOverflow}}} 
& \multicolumn{2}{c}{\textcolor{black}{\logoms \textbf{Mathematics}}} 
& \multicolumn{2}{c}{\textcolor{black}{\logosu \textbf{SuperUser}}} 
& \multicolumn{2}{c}{\textcolor{black}{\logoes \textbf{English}}} \\
& & Main & TPR-Gap & Main & TPR-Gap & Main & TPR-Gap & Main & TPR-Gap \\
\midrule
\multirow{4}{*}{DisplayName} & Baseline & 69.5 & 4.3 & 64.2 & 4.4 & 69.2 & 3.5 & 61.3 & 11.2 \\
\addlinespace[4pt]
& SAL (random) & \dab{0.2} 69.3 & \da{0.1} 4.2 & \dab{0.3} 63.9 & \ua{0.8} 5.2 & \uag{0.2} 69.4 & \da{0.7} 2.8 & \dab{0.3} 61.0 & \da{2.4} 8.8 \\
\addlinespace[4pt]
& SAL & \dab{0.8} 68.7 & \da{2.4} 1.9 & \dab{1.1} 63.1 & \da{1.8} 2.7 & \dab{0.3} 68.8 & \da{1.1} 2.3 & \dab{2.1} 59.2 & \ua{0.6} 11.8 \\
\addlinespace[4pt]
& H-SAL & \dab{1.8} 67.8 & \da{2.7} 1.6 & \dab{2.3} 62.0 & \da{2.3} 2.1 & \dab{0.6} 68.5 & \da{1.6} 1.9 & \dab{2.3} 59.0 & \da{4.3} 6.9 \\
\addlinespace[4pt]
\midrule

\multirow{4}{*}{Gender} & Baseline & 57.8 & 0.4 & 52.7 & 3.5 & 59.6 & 15.2 & 53.9 & 9.3 \\
\addlinespace[4pt]
& SAL (random) & \dab{1.2} 56.6 & \da{0.1} 0.2 & \uag{1.5} 54.2 & \da{1.0} 2.5 & \dab{1.3} 58.2 & \da{0.9} 14.4 & \dab{1.5} 52.4 & \ua{2.3} 11.6 \\
\addlinespace[4pt]
& SAL & \dab{1.6} 56.1 & \ua{0.5} 0.8 & \uag{0.4} 53.2 & \da{0.6} 2.9 & \uag{0.5} 60.1 & 15.2 & \uag{0.2} 54.1 & \ua{0.1} 9.4 \\
\addlinespace[4pt]
& H-SAL & \dab{3.0} 54.8 & 0.3 & \uag{0.6} 53.3 & \da{1.9} 1.6 & \uag{0.8} 60.4 & \da{4.0} 11.2 & 53.9 & \ua{0.3} 9.6 \\
\addlinespace[4pt]
\midrule

\multirow{4}{*}{Country} & Baseline & 67.8 & 10.9 & 61.4 & 7.5 & 63.3 & 9.1 & 58.9 & 10.6 \\
\addlinespace[4pt]
& SAL (random) & \dab{0.3} 67.5 & \da{0.6} 10.3 & \uag{0.1} 61.6 & \ua{0.4} 7.9 & \dab{0.6} 62.7 & \ua{0.8} 9.9 & \dab{0.4} 58.6 & \da{0.7} 9.9 \\
\addlinespace[4pt]
& SAL & \dab{0.9} 66.9 & \da{7.2} 3.8 & \dab{0.2} 61.3 & \da{2.7} 4.9 & \dab{0.4} 62.9 & \da{4.9} 4.2 & \dab{0.2} 58.7 & \ua{0.3} 10.9 \\
\addlinespace[4pt]
& H-SAL & \dab{2.5} 65.3 & \da{8.3} 2.6 & \dab{3.8} 57.6 & \da{4.2} 3.3 & \dab{1.3} 62.0 & \da{4.6} 4.5 & \dab{2.2} 56.7 & \ua{0.4} 11.0 \\
\addlinespace[4pt]
\midrule

\multirow{4}{*}{Reputation} & Baseline & 69.5 & 9.7 & 64.2 & 14.2 & 69.2 & 8.9 & 61.3 & 9.2 \\
\addlinespace[4pt]
& SAL (random) & \dab{0.2} 69.3 & \da{0.6} 9.1 & \dab{0.3} 63.9 & \da{0.9} 13.3 & \uag{0.2} 69.4 & 8.8 & \dab{0.3} 61.0 & \ua{0.4} 9.5 \\
\addlinespace[4pt]
& SAL & \dab{0.8} 68.7 & \da{9.6} 0.1 & \dab{1.4} 62.9 & \da{11.5} 2.7 & \dab{0.4} 68.7 & \da{4.5} 4.4 & \dab{1.2} 60.1 & \da{3.4} 5.8 \\
\addlinespace[4pt]
& H-SAL & \dab{1.8} 67.8 & \da{9.2} 0.5 & \dab{2.3} 62.0 & \da{12.0} 2.2 & \dab{0.6} 68.5 & \da{7.0} 1.8 & \dab{2.3} 59.0 & \da{2.0} 7.2 \\
\addlinespace[4pt]
\midrule

\multirow{4}{*}{Engagement} & Baseline & 69.1 & 8.5 & 64.0 & 9.5 & 66.6 & 10.5 & 62.4 & 8.6 \\
\addlinespace[4pt]
& SAL (random) & \dab{0.2} 68.8 & \da{1.2} 7.3 & \dab{0.1} 63.9 & \da{0.6} 8.9 & \dab{0.1} 66.4 & \da{1.3} 9.2 & \dab{1.2} 61.2 & \da{2.6} 6.0 \\
\addlinespace[4pt]
& SAL & \dab{1.3} 67.8 & \da{8.0} 0.5 & \dab{1.4} 62.7 & \da{7.0} 2.5 & \dab{0.6} 65.9 & \da{7.1} 3.4 & \dab{2.7} 59.7 & \da{5.4} 3.3 \\
\addlinespace[4pt]
& H-SAL & \dab{2.2} 66.9 & \da{7.8} 0.8 & \dab{2.8} 61.2 & \da{6.9} 2.6 & \dab{1.3} 65.2 & \da{6.0} 4.5 & \dab{2.8} 59.6 & \da{1.4} 7.2 \\
\addlinespace[4pt]
\bottomrule
\end{tabular}
}
\caption{\textbf{Mistral-7B results under the main erasure setting.}
Results follow the same user-disjoint evaluation protocol and method definitions as Table~\ref{tab:app-bert-main}.}
\label{tab:app-mistral-main}
\end{table*}

\subsection{Sensitivity to the Number of Removed Directions}
\label{app:different-k}

The main results use \(k=50\) for H-\textsc{SAL}, which gives the strongest debiasing effect overall. To show the fairness--utility trade-off, we additionally report model-averaged results for \(k \in \{5,10,24,45\}\) in Table~\ref{tab:app-k5-average}--\ref{tab:app-k45-average}. While larger \(k\) generally reduces TPR-Gap more strongly, \(k=24\) often provides a good balance between debiasing and preserving helpfulness accuracy.

\begin{table*}[!t]
\centering
\resizebox{\textwidth}{!}{
\begin{tabular}{ll rr  rr  rr  rr}
\toprule
Bias Type & Method 
& \multicolumn{2}{c}{\textcolor{black}{\logoso \textbf{StackOverflow}}} 
& \multicolumn{2}{c}{\textcolor{black}{\logoms \textbf{Mathematics}}} 
& \multicolumn{2}{c}{\textcolor{black}{\logosu \textbf{SuperUser}}} 
& \multicolumn{2}{c}{\textcolor{black}{\logoes \textbf{English}}} \\
& & Main & TPR-Gap & Main & TPR-Gap & Main & TPR-Gap & Main & TPR-Gap \\
\midrule
\multirow{4}{*}{DisplayName} & Baseline & 69.2 & 4.9 & 64.0 & 4.8 & 71.1 & 4.1 & 63.5 & 9.9 \\
\addlinespace[4pt]
& SAL (random) & 69.2 & \da{0.1} 4.8 & 64.0 & 4.8 & \dab{0.1} 71.0 & \da{0.1} 4.1 & \dab{0.2} 63.3 & \ua{0.2} 10.1 \\
\addlinespace[4pt]
& SAL & \dab{0.8} 68.4 & \da{2.9} 2.0 & \dab{1.1} 62.8 & \da{2.0} 2.8 & \dab{0.3} 70.9 & \ua{0.3} 4.4 & \dab{0.8} 62.7 & \da{0.8} 9.1 \\
\addlinespace[4pt]
& H-SAL & \dab{1.3} 67.9 & \da{3.3} 1.6 & \dab{1.7} 62.3 & \da{3.0} 1.8 & \dab{0.5} 70.6 & \da{0.4} 3.7 & \dab{0.6} 62.9 & \da{1.5} 8.4 \\
\addlinespace[4pt]
\midrule

\multirow{4}{*}{Gender} & Baseline & 56.7 & 0.3 & 54.7 & 2.8 & 58.8 & 11.6 & 54.3 & 9.9 \\
\addlinespace[4pt]
& SAL (random) & \dab{0.6} 56.1 & \ua{1.0} 1.3 & \uag{0.5} 55.2 & \da{0.1} 2.8 & \dab{1.0} 57.8 & \ua{0.1} 11.7 & \dab{0.3} 54.0 & \ua{1.4} 11.3 \\
\addlinespace[4pt]
& SAL & \dab{0.5} 56.1 & \ua{0.5} 0.8 & \uag{0.4} 55.1 & \da{0.1} 2.8 & \uag{0.4} 59.1 & 11.6 & \uag{0.1} 54.4 & \da{0.1} 9.8 \\
\addlinespace[4pt]
& H-SAL & \dab{1.4} 55.3 & \ua{0.8} 1.1 & \dab{0.7} 54.0 & \ua{0.6} 3.5 & \uag{0.8} 59.6 & \ua{0.8} 12.4 & \dab{0.4} 53.9 & \da{1.4} 8.6 \\
\addlinespace[4pt]
\midrule

\multirow{4}{*}{Country} & Baseline & 67.4 & 11.5 & 61.0 & 9.7 & 63.4 & 7.0 & 59.3 & 10.1 \\
\addlinespace[4pt]
& SAL (random) & 67.4 & \ua{0.1} 11.5 & 61.0 & \da{0.1} 9.7 & 63.4 & \da{0.1} 6.9 & \uag{0.1} 59.4 & \ua{0.4} 10.5 \\
\addlinespace[4pt]
& SAL & \dab{0.6} 66.8 & \da{8.1} 3.4 & \dab{0.1} 60.9 & \da{2.7} 7.1 & \dab{0.3} 63.1 & \da{3.9} 3.0 & \uag{0.1} 59.5 & \da{0.3} 9.8 \\
\addlinespace[4pt]
& H-SAL & \dab{1.7} 65.7 & \da{8.3} 3.2 & \dab{2.3} 58.7 & \da{1.3} 8.4 & \dab{0.7} 62.7 & \da{3.3} 3.7 & 59.4 & \da{2.6} 7.4 \\
\addlinespace[4pt]
\midrule

\multirow{4}{*}{Reputation} & Baseline & 69.2 & 8.6 & 64.0 & 11.8 & 71.1 & 8.8 & 63.5 & 9.8 \\
\addlinespace[4pt]
& SAL (random) & 69.2 & \da{0.1} 8.5 & 64.0 & \da{0.2} 11.6 & \dab{0.1} 71.0 & \da{0.2} 8.5 & \dab{0.2} 63.3 & \ua{0.2} 9.9 \\
\addlinespace[4pt]
& SAL & \dab{0.8} 68.4 & \da{8.4} 0.2 & \dab{1.2} 62.7 & \da{10.0} 1.8 & \dab{0.8} 70.3 & \da{4.9} 3.9 & \dab{1.3} 62.2 & \da{4.9} 4.9 \\
\addlinespace[4pt]
& H-SAL & \dab{1.3} 67.9 & \da{6.5} 2.1 & \dab{1.7} 62.3 & \da{8.0} 3.8 & \dab{0.5} 70.6 & \da{2.6} 6.2 & \dab{0.6} 62.9 & \da{3.1} 6.7 \\
\addlinespace[4pt]
\midrule

\multirow{4}{*}{Engagement} & Baseline & 68.6 & 7.7 & 63.6 & 8.5 & 67.2 & 8.4 & 63.6 & 9.6 \\
\addlinespace[4pt]
& SAL (random) & \dab{0.1} 68.5 & \da{0.1} 7.6 & 63.6 & 8.5 & \dab{0.1} 67.1 & \da{0.1} 8.3 & 63.5 & \ua{0.5} 10.1 \\
\addlinespace[4pt]
& SAL & \dab{1.1} 67.5 & \da{7.2} 0.5 & \dab{1.1} 62.5 & \da{6.4} 2.1 & \dab{0.4} 66.8 & \da{5.4} 3.1 & \dab{1.3} 62.2 & \da{6.0} 3.6 \\
\addlinespace[4pt]
& H-SAL & \dab{1.5} 67.1 & \da{6.1} 1.5 & \dab{1.7} 61.9 & \da{4.9} 3.5 & \dab{0.4} 66.8 & \da{4.0} 4.4 & \dab{1.1} 62.5 & \da{2.6} 7.0 \\
\addlinespace[4pt]
\bottomrule
\end{tabular}
}
\caption{\textbf{Average results under the user-disjoint split with H-\textsc{SAL} at \(k=5\).}
Results are averaged across BERT, Llama-3.1-8B, and Mistral-7B. \textbf{Main} denotes helpfulness prediction accuracy, and \textbf{TPR-Gap} denotes generalized disparity. \textsc{SAL} uses bias-specific ranks, while \textsc{SAL} (random) and H-\textsc{SAL} use \(k=5\) removed directions for all attributes. Other evaluation details follow Table~\ref{tab:ste-summary-average-natural-disjoint-k50}.}
\label{tab:app-k5-average}
\end{table*}

\begin{table*}[th!]
\centering
\resizebox{\textwidth}{!}{
\begin{tabular}{ll rr  rr  rr  rr}
\toprule
Bias Type & Method 
& \multicolumn{2}{c}{\textcolor{black}{\logoso \textbf{StackOverflow}}} 
& \multicolumn{2}{c}{\textcolor{black}{\logoms \textbf{Mathematics}}} 
& \multicolumn{2}{c}{\textcolor{black}{\logosu \textbf{SuperUser}}} 
& \multicolumn{2}{c}{\textcolor{black}{\logoes \textbf{English}}} \\
& & Main & TPR-Gap & Main & TPR-Gap & Main & TPR-Gap & Main & TPR-Gap \\
\midrule
\multirow{4}{*}{DisplayName} & Baseline & 69.2 & 4.9 & 64.0 & 4.8 & 71.1 & 4.1 & 63.5 & 9.9 \\
\addlinespace[4pt]
& SAL (random) & \dab{0.1} 69.1 & \da{0.2} 4.7 & 64.0 & 4.8 & \dab{0.1} 71.0 & \da{0.3} 3.9 & \dab{0.5} 63.0 & \da{0.4} 9.5 \\
\addlinespace[4pt]
& SAL & \dab{0.8} 68.4 & \da{2.9} 2.0 & \dab{1.1} 62.8 & \da{2.0} 2.8 & \dab{0.3} 70.9 & \ua{0.3} 4.4 & \dab{0.8} 62.7 & \da{0.8} 9.1 \\
\addlinespace[4pt]
& H-SAL & \dab{1.4} 67.8 & \da{3.7} 1.2 & \dab{1.8} 62.1 & \da{2.9} 1.8 & \dab{0.6} 70.5 & \da{0.5} 3.6 & \dab{0.8} 62.7 & \da{3.4} 6.5 \\
\addlinespace[4pt]
\midrule

\multirow{4}{*}{Gender} & Baseline & 56.7 & 0.3 & 54.7 & 2.8 & 58.8 & 11.6 & 54.3 & 9.9 \\
\addlinespace[4pt]
& SAL (random) & \dab{0.3} 56.4 & \ua{0.8} 1.1 & \uag{0.3} 55.0 & 2.9 & \dab{1.5} 57.3 & \ua{0.1} 11.6 & \dab{0.6} 53.7 & \ua{1.7} 11.7 \\
\addlinespace[4pt]
& SAL & \dab{0.5} 56.1 & \ua{0.5} 0.8 & \uag{0.4} 55.1 & \da{0.1} 2.8 & \uag{0.4} 59.1 & 11.6 & \uag{0.1} 54.4 & \da{0.1} 9.8 \\
\addlinespace[4pt]
& H-SAL & \dab{1.9} 54.8 & \ua{1.6} 1.9 & \dab{1.0} 53.8 & \da{0.2} 2.7 & \uag{0.7} 59.5 & \da{0.9} 10.7 & \dab{0.3} 54.0 & \da{0.9} 9.0 \\
\addlinespace[4pt]
\midrule

\multirow{4}{*}{Country} & Baseline & 67.4 & 11.5 & 61.0 & 9.7 & 63.4 & 7.0 & 59.3 & 10.1 \\
\addlinespace[4pt]
& SAL (random) & 67.3 & \ua{0.1} 11.6 & 61.1 & \da{0.1} 9.7 & \dab{0.1} 63.3 & \ua{0.2} 7.1 & \dab{0.2} 59.1 & \da{1.0} 9.1 \\
\addlinespace[4pt]
& SAL & \dab{0.6} 66.8 & \da{8.1} 3.4 & \dab{0.1} 60.9 & \da{2.7} 7.1 & \dab{0.3} 63.1 & \da{3.9} 3.0 & \uag{0.1} 59.5 & \da{0.3} 9.8 \\
\addlinespace[4pt]
& H-SAL & \dab{1.9} 65.5 & \da{8.2} 3.3 & \dab{2.7} 58.3 & \da{2.7} 7.0 & \dab{0.9} 62.5 & \da{2.8} 4.2 & 59.3 & \da{1.9} 8.1 \\
\addlinespace[4pt]
\midrule

\multirow{4}{*}{Reputation} & Baseline & 69.2 & 8.6 & 64.0 & 11.8 & 71.1 & 8.8 & 63.5 & 9.8 \\
\addlinespace[4pt]
& SAL (random) & \dab{0.1} 69.1 & \da{0.1} 8.6 & 64.0 & \da{0.3} 11.5 & \dab{0.1} 71.0 & \da{0.2} 8.6 & \dab{0.5} 63.0 & \da{0.8} 9.0 \\
\addlinespace[4pt]
& SAL & \dab{0.8} 68.4 & \da{8.4} 0.2 & \dab{1.2} 62.7 & \da{10.0} 1.8 & \dab{0.8} 70.3 & \da{4.9} 3.9 & \dab{1.3} 62.2 & \da{4.9} 4.9 \\
\addlinespace[4pt]
& H-SAL & \dab{1.4} 67.8 & \da{7.2} 1.5 & \dab{1.8} 62.1 & \da{8.9} 2.9 & \dab{0.6} 70.5 & \da{3.2} 5.6 & \dab{0.8} 62.7 & \da{3.7} 6.1 \\
\addlinespace[4pt]
\midrule

\multirow{4}{*}{Engagement} & Baseline & 68.6 & 7.7 & 63.6 & 8.5 & 67.2 & 8.4 & 63.6 & 9.6 \\
\addlinespace[4pt]
& SAL (random) & \dab{0.1} 68.5 & \da{0.2} 7.5 & 63.6 & 8.4 & \dab{0.1} 67.1 & \da{0.4} 8.1 & \dab{0.2} 63.3 & \da{0.5} 9.1 \\
\addlinespace[4pt]
& SAL & \dab{1.1} 67.5 & \da{7.2} 0.5 & \dab{1.1} 62.5 & \da{6.4} 2.1 & \dab{0.4} 66.8 & \da{5.4} 3.1 & \dab{1.3} 62.2 & \da{6.0} 3.6 \\
\addlinespace[4pt]
& H-SAL & \dab{1.6} 67.0 & \da{6.3} 1.4 & \dab{1.9} 61.7 & \da{5.4} 3.1 & \dab{0.7} 66.5 & \da{3.9} 4.5 & \dab{1.2} 62.4 & \da{3.0} 6.6 \\
\addlinespace[4pt]
\bottomrule
\end{tabular}
}
\caption{\textbf{Average results under the user-disjoint split with H-\textsc{SAL} at \(k=10\).}
Results follow the same model-averaged evaluation protocol and method definitions as Table~\ref{tab:app-k5-average}.}
\label{tab:app-k10-average}
\end{table*}

\begin{table*}[th!]
\centering
\resizebox{\textwidth}{!}{
\begin{tabular}{ll rr  rr  rr  rr}
\toprule
Bias Type & Method 
& \multicolumn{2}{c}{\textcolor{black}{\logoso \textbf{StackOverflow}}} 
& \multicolumn{2}{c}{\textcolor{black}{\logoms \textbf{Mathematics}}} 
& \multicolumn{2}{c}{\textcolor{black}{\logosu \textbf{SuperUser}}} 
& \multicolumn{2}{c}{\textcolor{black}{\logoes \textbf{English}}} \\
& & Main & TPR-Gap & Main & TPR-Gap & Main & TPR-Gap & Main & TPR-Gap \\
\midrule
\multirow{4}{*}{DisplayName} & Baseline & 69.2 & 4.9 & 64.0 & 4.8 & 71.1 & 4.1 & 63.5 & 9.9 \\
\addlinespace[4pt]
& SAL (random) & \dab{0.3} 68.9 & \da{0.6} 4.3 & 64.0 & \da{0.2} 4.6 & \dab{0.1} 71.0 & \da{0.2} 3.9 & \dab{0.6} 63.0 & \da{1.2} 8.7 \\
\addlinespace[4pt]
& SAL & \dab{0.8} 68.4 & \da{2.9} 2.0 & \dab{1.1} 62.8 & \da{2.0} 2.8 & \dab{0.3} 70.9 & \ua{0.3} 4.4 & \dab{0.8} 62.7 & \da{0.8} 9.1 \\
\addlinespace[4pt]
& H-SAL & \dab{1.5} 67.7 & \da{3.8} 1.1 & \dab{2.0} 62.0 & \da{3.2} 1.6 & \dab{1.0} 70.1 & \da{0.5} 3.6 & \dab{1.1} 62.4 & \da{3.2} 6.7 \\
\addlinespace[4pt]
\midrule

\multirow{4}{*}{Gender} & Baseline & 56.7 & 0.3 & 54.7 & 2.8 & 58.8 & 11.6 & 54.3 & 9.9 \\
\addlinespace[4pt]
& SAL (random) & \dab{0.4} 56.3 & \ua{0.8} 1.1 & \uag{0.4} 55.1 & 2.9 & \dab{0.6} 58.2 & \ua{0.9} 12.4 & \dab{0.6} 53.6 & \ua{2.0} 11.9 \\
\addlinespace[4pt]
& SAL & \dab{0.5} 56.1 & \ua{0.5} 0.8 & \uag{0.4} 55.1 & \da{0.1} 2.8 & \uag{0.4} 59.1 & 11.6 & \uag{0.1} 54.4 & \da{0.1} 9.8 \\
\addlinespace[4pt]
& H-SAL & \dab{1.8} 54.9 & \ua{1.1} 1.4 & \dab{0.8} 53.9 & \da{1.7} 1.2 & \dab{1.0} 57.8 & \ua{0.2} 11.8 & \dab{0.7} 53.6 & \da{1.7} 8.2 \\
\addlinespace[4pt]
\midrule

\multirow{4}{*}{Country} & Baseline & 67.4 & 11.5 & 61.0 & 9.7 & 63.4 & 7.0 & 59.3 & 10.1 \\
\addlinespace[4pt]
& SAL (random) & \dab{0.2} 67.1 & \da{0.3} 11.1 & \dab{0.1} 60.9 & \da{0.4} 9.3 & \dab{0.1} 63.3 & \da{0.2} 6.7 & 59.4 & \da{1.8} 8.3 \\
\addlinespace[4pt]
& SAL & \dab{0.6} 66.8 & \da{8.1} 3.4 & \dab{0.1} 60.9 & \da{2.7} 7.1 & \dab{0.3} 63.1 & \da{3.9} 3.0 & \uag{0.1} 59.5 & \da{0.3} 9.8 \\
\addlinespace[4pt]
& H-SAL & \dab{2.0} 65.4 & \da{9.1} 2.4 & \dab{3.1} 57.9 & \da{5.6} 4.1 & \dab{0.9} 62.5 & \da{3.0} 4.0 & \dab{0.9} 58.4 & \da{2.6} 7.4 \\
\addlinespace[4pt]
\midrule

\multirow{4}{*}{Reputation} & Baseline & 69.2 & 8.6 & 64.0 & 11.8 & 71.1 & 8.8 & 63.5 & 9.8 \\
\addlinespace[4pt]
& SAL (random) & \dab{0.3} 68.9 & \da{0.6} 8.0 & 64.0 & \da{0.5} 11.4 & \dab{0.1} 71.0 & 8.7 & \dab{0.6} 63.0 & \da{0.6} 9.2 \\
\addlinespace[4pt]
& SAL & \dab{0.8} 68.4 & \da{8.4} 0.2 & \dab{1.2} 62.7 & \da{10.0} 1.8 & \dab{0.8} 70.3 & \da{4.9} 3.9 & \dab{1.3} 62.2 & \da{4.9} 4.9 \\
\addlinespace[4pt]
& H-SAL & \dab{1.5} 67.7 & \da{7.9} 0.7 & \dab{2.0} 62.0 & \da{9.7} 2.1 & \dab{0.9} 70.2 & \da{4.4} 4.3 & \dab{1.1} 62.4 & \da{4.4} 5.3 \\
\addlinespace[4pt]
\midrule

\multirow{4}{*}{Engagement} & Baseline & 68.6 & 7.7 & 63.6 & 8.5 & 67.2 & 8.4 & 63.6 & 9.6 \\
\addlinespace[4pt]
& SAL (random) & \dab{0.2} 68.4 & \da{0.7} 7.0 & \dab{0.1} 63.5 & \da{0.5} 8.0 & 67.2 & \da{0.3} 8.1 & \dab{0.3} 63.3 & \da{0.3} 9.3 \\
\addlinespace[4pt]
& SAL & \dab{1.1} 67.5 & \da{7.2} 0.5 & \dab{1.1} 62.5 & \da{6.4} 2.1 & \dab{0.4} 66.8 & \da{5.4} 3.1 & \dab{1.3} 62.2 & \da{6.0} 3.6 \\
\addlinespace[4pt]
& H-SAL & \dab{1.8} 66.8 & \da{7.1} 0.6 & \dab{2.3} 61.3 & \da{6.3} 2.2 & \dab{0.7} 66.5 & \da{4.5} 3.9 & \dab{1.4} 62.2 & \da{2.6} 7.0 \\
\addlinespace[4pt]
\bottomrule
\end{tabular}
}
\caption{\textbf{Average results under the user-disjoint split with H-\textsc{SAL} at \(k=24\).}
Results follow the same model-averaged evaluation protocol and method definitions as Table~\ref{tab:app-k5-average}.}
\label{tab:app-k24-average}
\end{table*}

\begin{table*}[th!]
\centering
\resizebox{\textwidth}{!}{
\begin{tabular}{ll rr  rr  rr  rr}
\toprule
Bias Type & Method 
& \multicolumn{2}{c}{\textcolor{black}{\logoso \textbf{StackOverflow}}} 
& \multicolumn{2}{c}{\textcolor{black}{\logoms \textbf{Mathematics}}} 
& \multicolumn{2}{c}{\textcolor{black}{\logosu \textbf{SuperUser}}} 
& \multicolumn{2}{c}{\textcolor{black}{\logoes \textbf{English}}} \\
& & Main & TPR-Gap & Main & TPR-Gap & Main & TPR-Gap & Main & TPR-Gap \\
\midrule
\multirow{4}{*}{DisplayName} & Baseline & 69.2 & 4.9 & 64.0 & 4.8 & 71.1 & 4.1 & 63.5 & 9.9 \\
\addlinespace[4pt]
& SAL (random) & \dab{0.4} 68.8 & \da{0.7} 4.1 & 64.0 & \da{0.7} 4.1 & 71.1 & \ua{0.3} 4.5 & \dab{0.7} 62.8 & \da{1.7} 8.2 \\
\addlinespace[4pt]
& SAL & \dab{0.8} 68.4 & \da{2.9} 2.0 & \dab{1.1} 62.8 & \da{2.0} 2.8 & \dab{0.3} 70.9 & \ua{0.3} 4.4 & \dab{0.8} 62.7 & \da{0.8} 9.1 \\
\addlinespace[4pt]
& H-SAL & \dab{1.6} 67.6 & \da{3.5} 1.4 & \dab{1.8} 62.1 & \da{3.2} 1.6 & \dab{1.1} 70.1 & \da{0.7} 3.4 & \dab{1.8} 61.7 & \da{4.3} 5.6 \\
\addlinespace[4pt]
\midrule

\multirow{4}{*}{Gender} & Baseline & 56.7 & 0.3 & 54.7 & 2.8 & 58.8 & 11.6 & 54.3 & 9.9 \\
\addlinespace[4pt]
& SAL (random) & \dab{0.9} 55.8 & \ua{0.9} 1.2 & \uag{0.3} 55.0 & \ua{0.1} 2.9 & \dab{0.9} 57.9 & \da{2.0} 9.6 & \dab{0.9} 53.4 & \ua{2.5} 12.4 \\
\addlinespace[4pt]
& SAL & \dab{0.5} 56.1 & \ua{0.5} 0.8 & \uag{0.4} 55.1 & \da{0.1} 2.8 & \uag{0.4} 59.1 & 11.6 & \uag{0.1} 54.4 & \da{0.1} 9.8 \\
\addlinespace[4pt]
& H-SAL & \dab{2.3} 54.4 & \ua{1.1} 1.4 & \dab{1.3} 53.4 & \da{1.2} 1.6 & \dab{2.0} 56.8 & \da{0.5} 11.1 & \dab{1.1} 53.1 & 9.9 \\
\addlinespace[4pt]
\midrule

\multirow{4}{*}{Country} & Baseline & 67.4 & 11.5 & 61.0 & 9.7 & 63.4 & 7.0 & 59.3 & 10.1 \\
\addlinespace[4pt]
& SAL (random) & \dab{0.4} 67.0 & \da{0.8} 10.7 & \dab{0.1} 60.9 & \da{0.4} 9.3 & \dab{0.2} 63.2 & \da{0.2} 6.8 & 59.4 & \da{1.2} 8.8 \\
\addlinespace[4pt]
& SAL & \dab{0.6} 66.8 & \da{8.1} 3.4 & \dab{0.1} 60.9 & \da{2.7} 7.1 & \dab{0.3} 63.1 & \da{3.9} 3.0 & \uag{0.1} 59.5 & \da{0.3} 9.8 \\
\addlinespace[4pt]
& H-SAL & \dab{2.3} 65.1 & \da{9.3} 2.2 & \dab{3.7} 57.3 & \da{5.9} 3.8 & \dab{0.8} 62.6 & \da{3.1} 3.9 & \dab{1.5} 57.8 & \da{1.2} 8.9 \\
\addlinespace[4pt]
\midrule

\multirow{4}{*}{Reputation} & Baseline & 69.2 & 8.6 & 64.0 & 11.8 & 71.1 & 8.8 & 63.5 & 9.8 \\
\addlinespace[4pt]
& SAL (random) & \dab{0.4} 68.8 & \da{0.8} 7.8 & 64.0 & \da{0.7} 11.1 & 71.1 & \ua{0.2} 8.9 & \dab{0.7} 62.8 & \da{1.4} 8.3 \\
\addlinespace[4pt]
& SAL & \dab{0.8} 68.4 & \da{8.4} 0.2 & \dab{1.2} 62.7 & \da{10.0} 1.8 & \dab{0.8} 70.3 & \da{4.9} 3.9 & \dab{1.3} 62.2 & \da{4.9} 4.9 \\
\addlinespace[4pt]
& H-SAL & \dab{1.6} 67.6 & \da{8.3} 0.3 & \dab{1.8} 62.1 & \da{9.9} 2.0 & \dab{1.1} 70.1 & \da{5.1} 3.7 & \dab{1.8} 61.7 & \da{4.8} 5.0 \\
\addlinespace[4pt]
\midrule

\multirow{4}{*}{Engagement} & Baseline & 68.6 & 7.7 & 63.6 & 8.5 & 67.2 & 8.4 & 63.6 & 9.6 \\
\addlinespace[4pt]
& SAL (random) & \dab{0.4} 68.2 & \da{1.1} 6.6 & \dab{0.2} 63.5 & \da{0.8} 7.7 & \dab{0.2} 67.0 & \da{0.3} 8.1 & \dab{0.8} 62.8 & \da{0.8} 8.8 \\
\addlinespace[4pt]
& SAL & \dab{1.1} 67.5 & \da{7.2} 0.5 & \dab{1.1} 62.5 & \da{6.4} 2.1 & \dab{0.4} 66.8 & \da{5.4} 3.1 & \dab{1.3} 62.2 & \da{6.0} 3.6 \\
\addlinespace[4pt]
& H-SAL & \dab{2.0} 66.6 & \da{7.2} 0.5 & \dab{2.5} 61.1 & \da{6.6} 1.8 & \dab{0.8} 66.4 & \da{4.8} 3.6 & \dab{1.8} 61.8 & \da{2.4} 7.2 \\
\addlinespace[4pt]
\bottomrule
\end{tabular}
}
\caption{\textbf{Average results under the user-disjoint split with H-\textsc{SAL} at \(k=45\).}
Results follow the same model-averaged evaluation protocol and method definitions as Table~\ref{tab:app-k5-average}.}
\label{tab:app-k45-average}
\end{table*}

\end{document}